\documentclass[10pt,twocolumn,journal]{IEEEtran}

\usepackage{xcolor}
\usepackage{amsmath,amsthm,amssymb,mathtools,bm,amsfonts}
\mathtoolsset{mathic=true}
\usepackage{bbm,dsfont}
\usepackage{scalerel}
\usepackage{graphicx}
\usepackage{pgfplots}
\pgfplotsset{compat=1.17}
\usepackage{tikz}
\usetikzlibrary{svg.path,shapes.geometric,arrows.meta,positioning}
\usepackage{psfrag}
\usepackage{subcaption}
\usepackage{booktabs,adjustbox,tabularx,multirow,array}
\newcolumntype{Y}{>{\centering\arraybackslash}X}
\usepackage{epstopdf}
\usepackage[inline,shortlabels]{enumitem}
\usepackage{cite}
\usepackage[colorlinks=true,allcolors=blue]{hyperref}
\usepackage{microtype}
\usepackage{stfloats}
\usepackage{cuted}
\usepackage{balance}
\usepackage{pifont}

\setlist{noitemsep,leftmargin=*,topsep=2pt,partopsep=1pt}
\setlist[itemize]{itemsep=0pt,topsep=0pt,parsep=0pt,partopsep=0pt}
\setlist[enumerate]{itemsep=0pt,topsep=0pt,parsep=0pt,partopsep=0pt}
\renewcommand{\arraystretch}{0.95}
\newcommand{\EX}{\mathbb{E}}

\newcommand{\SUBJTO}{\mathrm{s.t.}}
\newcommand{\epsstale}{\epsilon_{\mathrm{stale}}}
\newcommand{\epsdir}{\epsilon_{\mathrm{dir}}}
\newcommand{\best}[1]{\textbf{#1}}
\newcommand{\cmark}{\ding{51}}
\newcommand{\xmark}{\ding{55}}
\newcommand{\pmark}{\ensuremath{\triangle}}
\newcommand{\blue}[1]{{\color{blue}#1}}
\colorlet{linkblue}{blue}\hypersetup{allcolors=linkblue}\colorlet{blue}{black}\renewcommand{\blue}[1]{#1}
\newtheorem{theo}{Theorem}
\newtheorem{lem}{Lemma}
\newtheorem{propo}{Proposition}
\theoremstyle{definition}

\definecolor{orcidlogocol}{HTML}{A6CE39}
\tikzset{
  orcidlogo/.pic={
    \fill[orcidlogocol] svg{M256,128c0,70.7-57.3,128-128,128C57.3,256,0,198.7,0,128C0,57.3,57.3,0,128C0,57.3,57.3,0,128,0C198.7,0,256,57.3,256,128z};
    \fill[white] svg{M86.3,186.2H70.9V79.1h15.4v48.4V186.2z}
                 svg{M108.9,79.1h41.6c39.6,0,57,28.3,57,53.6c0,27.5-21.5,53.6-56.8,53.6h-41.8V79.1z M124.3,172.4h24.5c34.9,0,42.9-26.5,42.9-39.7c0-21.5-13.7-39.7-43.7-39.7h-23.7V172.4z}
                 svg{M88.7,56.8c0,5.5-4.5,10.1-10.1,10.1c-5.6,0-10.1-4.6-10.1-10.1c0-5.6,4.5-10.1,10.1-10.1C84.2,46.7,88.7,51.3,88.7,56.8z};
  }
}
\newcommand\orcidicon[1]{\href{https://orcid.org/#1}{\mbox{\scalerel*{
\begin{tikzpicture}[yscale=-1,transform shape]
\pic{orcidlogo};
\end{tikzpicture}
}{|}}}}

\title{Secure Decentralized Federated Learning\\ via Gossip and Virtual Voting}
\author{
Amirhossein Taherpour \orcidicon{0000-0003-4647-102X}, 
and Xiaodong Wang \orcidicon{0000-0002-2945-9240},~\IEEEmembership{Fellow,~IEEE}
\thanks{Amirhossein Taherpour and Xiaodong Wang are with the Department of Electrical Engineering, Columbia University, New York, NY, USA (e-mails: at3532@columbia.edu, xw2008@columbia.edu).}
}
\date{}

\begin{document}
\maketitle
\thispagestyle{empty}

\begin{abstract}
\blue{
Decentralized federated learning (DFL) removes the central server by letting nodes exchange model updates through peer-to-peer gossip, but existing gossip-based methods often lack provenance finality and resilience to Byzantine or lazy participants. Ledger-assisted federated learning (FL) improves auditability, yet blockchains, shards, or settlement committees can reintroduce global coordination costs that conflict with DFL locality. This paper proposes \emph{gspDAG-FL}, a secure DFL framework that derives consensus from the same gossip history used to disseminate models. Nodes exchange model payloads only with neighbors, while full nodes collect event certificates and receiver-endorsed accepted gossip proofs, reconstruct a compact Topology directed acyclic graph (DAG), and run Hashgraph-style virtual voting followed by compact full-node certificates. Finality is over unique model-origin tuples, not identical local parameter states. To improve resilience, gspDAG-FL combines payload validation, accepted-proof validation, and private semantic audit before aggregation. We formalize the adversarial setting, prove safety and conditional liveness of the control plane, and give a convergence guarantee for certified perturbed gossip under time-varying effective mixing. Experiments on MNIST classification and Penn Treebank language modeling, using fair held-out validation/audit data and networks up to \(N=100\), show that gspDAG-FL achieves learning quality close to validation-based ledger FL while reducing coordination bottlenecks, improving throughput, and maintaining high invalid-origin detection under mixed Byzantine and lazy participation.
}
\end{abstract}

\begin{IEEEkeywords}
Decentralized federated learning (DFL), gossip, Hashgraph, directed acyclic graph (DAG), Byzantine fault tolerance (BFT), virtual voting, federated learning security, auditability.
\end{IEEEkeywords}

\section{Introduction}
\begingroup
\color{blue}

Federated learning (FL) trains a model across distributed devices or institutions while keeping raw data local and exchanging only model-side information~\cite{McMahan2017,Kairouz2021,Taherpour2026ZKHybridFL}. In the standard architecture, a central server coordinates local training, aggregation, and model redistribution, often through federated averaging and secure aggregation~\cite{Bonawitz2017}. This architecture is effective, but it concentrates coordination, trust, and failure modes at the server. It may also amplify privacy risk, since gradients and model updates can leak sensitive information even when raw data are never shared~\cite{Zhu2019}.\par

A natural alternative is decentralized federated learning (DFL), where peers exchange updates directly through a communication graph. Randomized gossip and decentralized stochastic optimization show that repeated neighbor averaging can converge at a rate controlled by topology and mixing quality~\cite{Boyd2006,Lian2017}. Asynchronous push--pull, push-sum, and compressed gossip variants improve wall-clock efficiency or communication cost under heterogeneous links~\cite{Lian2018,Assran2019,Koloskova2019}. Recent systems such as GossipFL, FedDual, and semi-decentralized optimization further show that neighbor-to-neighbor communication can reduce central bottlenecks~\cite{Tang2023,Chen2023,Wang2025,Taherpour2025SPIDChain}. However, most serverless gossip-based methods assume benign participation. They usually do not provide a system-wide record of update provenance, nor do they define which model origins are final and eligible for aggregation when some nodes are Byzantine, lazy, or behaviorally malicious.\par

Robust aggregation addresses part of this problem by limiting the effect of abnormal updates during aggregation. Byzantine-resilient rules and coordinate-wise robust methods improve resilience under bounded adversarial fractions~\cite{Blanchard2017,Yin2018,Pillutla2022}. Sybil-aware defenses reduce the impact of multiple colluding identities~\cite{Fung2018}. Backdoor attacks show, however, that malicious clients may preserve clean accuracy while inducing attacker-chosen behavior on triggered inputs~\cite{Bagdasaryan2020,Wang2020}. Post-hoc and input-level backdoor detectors, including Neural Cleanse, STRIP, and spectral signatures, study complementary forms of behavioral inspection~\cite{Wang2019a,STRIP2019,SpectralSignatures2018}. These defenses are important, but they are not a complete substitute for finality in DFL: a node also needs to know whether an update origin has sufficient provenance and whether the network agrees that it should be eligible for aggregation.\par

A second line of work introduces distributed ledgers~\cite{Taherpour2025CodedBlockchainIoT,Taherpour2024HybridChain} into FL. Blockchain-assisted FL systems use tamper-evident records, smart contracts, or consensus mechanisms to reduce reliance on a trusted coordinator~\cite{Kim2019,Warnat2021,Qu2020,Li2022,jkmn1,bambool1,WWWZ1}. DAG and sharded ledgers reduce some serialization costs by allowing concurrent approvals or shard-level processing, as in ChainFL, DAG-FL, DAG-EnseFL, and DAG-BFL~\cite{Yuan2024,Cao2023,Chen2025,Wang2025DAG}. IronForge further studies open and fair decentralized FL through committee-based validation and settlement~\cite{BB6}. These systems improve auditability, but in most cases the ledger remains a separate coordination layer: model updates are submitted to a block, shard, DAG ledger, or committee, and that external mechanism decides admission.\par

The gap is therefore not simply the absence of a DAG ledger. The gap is the lack of a gossip-native consensus layer for DFL. In serverless FL, scalability comes from locality: each node exchanges information with a small neighborhood. If finality requires broad model dissemination, complete ledger visibility at all nodes, or a separate global committee decision for every update, then the protocol partially reintroduces the coordination bottleneck that gossip was meant to avoid. Hashgraph-style consensus suggests a different direction: the communication history itself can form a DAG, and virtual voting can infer local confirmation from gossip-about-gossip metadata~\cite{Baird2016}. Classical Byzantine fault-tolerant consensus relies on quorum intersection for safety~\cite{Castro1999}; Hashgraph adapts this principle to a gossip DAG. The question is how to adapt this principle to FL so that the control plane certifies provenance-admissible model origins while the data plane remains local and peer-to-peer.\par

\begin{table*}[!t]
\caption{Feature-level positioning of gspDAG-FL relative to representative decentralized and ledger-assisted FL systems. Here \(\pmark\) denotes partial support.}
\label{tab:related_positioning}
\centering
\footnotesize
\setlength{\tabcolsep}{3.2pt}
\renewcommand{\arraystretch}{1.13}
\begin{adjustbox}{max width=\textwidth}
\begin{tabular}{p{0.22\textwidth}|c|c|p{0.12\textwidth}|p{0.09\textwidth}|p{0.11\textwidth}|p{0.12\textwidth}}
\toprule
\textbf{System family} &
\textbf{Local payload} &
\textbf{Gossip-derived finality} &
\textbf{Finality object} &
\textbf{Vote type} &
\textbf{Validation} &
\textbf{Bottleneck}
\\
\midrule
Serverless gossip:
D-PSGD, AD-PSGD, GossipFL, FedDual~\cite{Lian2017,Lian2018,Tang2023,Chen2023}
& \cmark & \xmark & None & None & Local / none & No finality\\
\midrule
Blockchain FL:
BLADE-FL~\cite{Li2022}
& \xmark & \xmark & Block update & PoW & Public & Blocks\\
\midrule
DAG / sharded ledger FL:
ChainFL, DAG-FL, DAG-EnseFL, DAG-BFL~\cite{Yuan2024,Cao2023,Chen2025,Wang2025DAG}
& \pmark & \xmark & Ledger update & Ledger / shard & Public & Shard sync\\
\midrule
Committee-based open FL:
IronForge~\cite{BB6}
& \pmark & \xmark & Committee model & Committee & Public / PoL & Settlement\\
\midrule
\textbf{gspDAG-FL}
& \cmark & \cmark & \textbf{Origin tuple} & \textbf{Virtual + cert.} & \textbf{Local/private} & \textbf{Proof metadata}\\
\bottomrule
\end{tabular}
\end{adjustbox}
\end{table*}

This paper proposes \emph{gspDAG-FL}, a secure DFL framework that derives finality from the same gossip history used to disseminate models. Nodes exchange model payloads only through one-hop neighbor gossip. In parallel, compact signed event certificates and receiver-endorsed accepted gossip proofs are forwarded to full nodes, which reconstruct a Topology DAG and run virtual voting over communication history. The resulting finality is over unique model-origin tuples, not over identical local parameter states. Each node aggregates only the certified models that it has locally observed and stored, so locality is preserved while invalid origins are filtered through a global provenance-admissibility decision.\par

The key distinction is the source of finality. Existing ledger-assisted FL systems finalize submitted updates through block, shard, DAG-ledger, or committee state. In contrast, gspDAG-FL finalizes model-origin tuples from signed gossip-history proofs. Thus, model tensors remain on local peer-to-peer data paths, while full nodes run virtual voting over compact topology metadata and exchange only compact confirmation certificates.\par

The contributions are as follows.
\begin{enumerate}
\item We introduce a gossip-native DFL architecture in which the data plane and control plane are separated. Model tensors remain on local neighbor-to-neighbor paths, while full nodes process event certificates and accepted-proof metadata. Consensus is therefore derived from observed gossip history rather than imposed by a separate global block, shard, or committee protocol.

\item We formalize the adversarial setting for decentralized FL with learning-clean, lazy, Byzantine, and control-correct nodes. We derive the local update rule from a consensus-constrained optimization view, state the required mixing and smoothness assumptions, and clarify that global finality concerns unique origin tuples rather than identical per-node parameters.

\item We design a multi-stage admission pipeline. Payload validation rejects stale, abnormally large, or directionally inconsistent updates before forwarding; accepted-proof validation prevents forged topology edges and equivocation; and post-consensus private semantic audit removes behaviorally anomalous confirmed models before aggregation.

\item We establish theoretical properties of the control and learning planes. Under stated quorum and delivery assumptions, virtual voting is well-defined, full-node certificates satisfy safety, and termination follows once enough valid origins are disseminated. We also give a convergence statement for certified perturbed gossip under time-varying effective mixing.

\item We evaluate gspDAG-FL on image classification and language modeling under Byzantine and lazy participation. The experiments use the same effective training budget for all methods, include network sizes up to \(N=100\), and report learning quality, detection rates, ripple dynamics, latency, throughput, and convergence rounds.
\end{enumerate}

The remainder of the paper is organized as follows. Sec.~II reviews centralized, decentralized gossip-based, and ledger-assisted FL protocols, and introduces the adversarial and optimization setting. Sec.~III presents the gspDAG-FL architecture, including light-node and full-node roles, the Topology DAG, and the epoch workflow. Sec.~IV specifies validation, virtual voting, finality, robustness, complexity, and the learning guarantee. Sec.~V reports the simulation setup and results. Sec.~VI concludes the paper. The appendices contain the technical proofs.

\endgroup

\section{Federated Learning Protocols}

\subsection{Conventional FL}
Suppose that \(N\) nodes participate in FL. Node \(i\) holds local data \(\mathcal{D}_i\) and defines
\begin{equation}
F_i(\theta)\triangleq
\mathbb{E}_{x\sim\mathcal{D}_i}
\left[\mathcal{L}_i(x;\theta)\right],
\end{equation}
where \(\theta\) is the model parameter. In the benign setting, the goal is
\begin{equation}\label{main}
    \min_{\theta}F(\theta)
    \triangleq
    \frac{1}{N}\sum_{i=1}^{N}F_i(\theta).
\end{equation}

\blue{
We distinguish learning behavior from control-plane behavior. Let \(\mathcal{H}_{\mathrm{L}}\), \(\mathcal{Z}\), and \(\mathcal{B}\) denote the learning-clean, lazy, and Byzantine node sets, respectively. Learning-clean nodes perform fresh local training and follow the protocol. Lazy nodes are control-correct but may replay stale updates. Byzantine nodes may send arbitrary payloads or equivocate, subject only to cryptographic authentication. The control-correct set is
\[
    \mathcal{C}_{\mathrm{ctrl}}
    =
    \{1,\ldots,N\}\setminus\mathcal{B}.
\]
The learning target is the learning-clean objective
\begin{equation}\label{eq:honest_objective}
    \min_{\theta}F_{\mathcal{H}_{\mathrm{L}}}(\theta)
    \triangleq
    \frac{1}{|\mathcal{H}_{\mathrm{L}}|}
    \sum_{i\in\mathcal{H}_{\mathrm{L}}}F_i(\theta).
\end{equation}
The protocol does not know these sets. Its validation and consensus layers aim to certify enough fresh, non-Byzantine origins while excluding stale or inconsistent origins.
}

In conventional FL~\cite{McMahan2017,Bonawitz2017}, as shown in Fig.~\ref{fornow}(a), a central server coordinates all nodes. At epoch \(t\), node \(i\) starts from the current global model \(\theta_i^{t,0}=\theta^{t-1}\) and performs \(K\) local stochastic-gradient steps:
\begin{equation}\label{eq:mini-batch}
 \theta_i^{t,k+1}
 =
 \theta_i^{t,k}
 -
 \eta
 \nabla\mathcal{L}_i
 \left(x_i^{t,k};\theta_i^{t,k}\right),
 \quad k=0,\ldots,K-1.
\end{equation}
The server aggregates
\begin{equation}\label{eq:central_agg}
  \theta^t
  =
  \sum_{i=1}^{N}w_i\theta_i^{t,K},
  \quad
  w_i>0,
  \quad
  \sum_{i=1}^{N}w_i=1,
\end{equation}
and broadcasts \(\theta^t\) to all nodes.

\subsection{Gossip-based Decentralized FL}
\label{gossip-fl}
Gossip-based DFL removes the central server by allowing each node to exchange models only with a neighbor set. Let \(\mathcal{G}=(\mathcal{V},\mathcal{E})\) be the communication graph, \(\mathcal{N}_i\) the neighbors of node \(i\), and \(\mathcal{N}_i^+=\mathcal{N}_i\cup\{i\}\).

\blue{
The consensus-constrained optimization form underlying decentralized training is
\begin{equation}\label{eq:consensus_problem}
    \min_{\{\theta_i\}_{i=1}^N}
    \sum_{i=1}^{N}F_i(\theta_i)
    \quad
    \SUBJTO
    \quad
    \theta_i=\theta_j,\ \forall (i,j)\in\mathcal{E}.
\end{equation}
The update used below can be interpreted as a linearized proximal primal--dual step for \eqref{eq:consensus_problem}. At epoch \(t\), node \(i\) approximately minimizes
\begin{equation}\label{eq:local_surrogate}
    F_i(\theta)
    -
    \langle \hat g_i^{t-1},\theta\rangle
    +
    \frac{1}{2\lambda}
    \|\theta-\theta_i^{t-1}\|_2^2,
\end{equation}
where \(\hat g_i^{t-1}\) is a local dual/tracking variable and \(\lambda>0\) controls the proximal consensus pull.
}

Node \(i\) initializes \(\theta_i^{t,0}=\theta_i^{t-1}\) and performs
\begin{equation}
\label{SL_main}
\begin{aligned}
\theta_i^{t,k+1}
={}&
\theta_i^{t,k}
-
\eta
\Bigg(
\nabla \mathcal{L}_i
\big(x_i^{t,k};\theta_i^{t,k}\big)
-
\hat{g}_i^{t-1}
\\
&\qquad
+
\frac{1}{\lambda}
\bigl(\theta_i^{t,k}-\theta_i^{t-1}\bigr)
\Bigg),
\\
&\hspace{35mm}
k=0,\ldots,K-1 .
\end{aligned}
\end{equation}
After \(K\) local steps,
\begin{equation}\label{eq:dual_update}
   \hat{g}_i^t
   =
   \hat{g}_i^{t-1}
   -
   \frac{1}{\lambda}
   \bigl(\theta_i^{t,K}-\theta_i^{t-1}\bigr),
\end{equation}
and
\begin{equation}\label{SL_intermediate}
  z_i^t
  =
  \theta_i^{t,K}
  -
  \lambda\hat{g}_i^{t-1}.
\end{equation}
A one-hop gossip aggregation would then take the form
\begin{equation}\label{eq:gossip_agg}
  \theta_i^t
  =
  \sum_{j\in \mathcal{N}_i^+}w_{ij}z_j^t,
  \quad
  w_{ij}\ge0,
  \quad
  \sum_{j\in \mathcal{N}_i^+}w_{ij}=1.
\end{equation}

\blue{
For nominal decentralized convergence, the graph is connected, \(w_{ij}=0\) if \(j\notin\mathcal{N}_i^+\), and \(W=[w_{ij}]\) is doubly stochastic:
\begin{equation}\label{eq:doubly_stochastic}
    W\bm{1}=\bm{1},
    \qquad
    \bm{1}^{\top}W=\bm{1}^{\top}.
\end{equation}
Its disagreement factor satisfies
\begin{equation}\label{eq:spectral_gap}
    \rho
    \triangleq
    \left\|
    W-\frac{1}{N}\bm{1}\bm{1}^{\top}
    \right\|_2
    <1.
\end{equation}
We also assume that learning-clean losses are \(L\)-smooth, stochastic gradients have bounded variance, and data heterogeneity is bounded:
\begin{equation}\label{eq:data_heterogeneity}
    \frac{1}{|\mathcal{H}_{\mathrm{L}}|}
    \sum_{i\in\mathcal{H}_{\mathrm{L}}}
    \left\|
    \nabla F_i(\theta)
    -
    \nabla F_{\mathcal{H}_{\mathrm{L}}}(\theta)
    \right\|_2^2
    \le \zeta^2 .
\end{equation}
In gspDAG-FL, the final aggregation matrix is time-varying because certified and post-audit origin sets differ across nodes; this is handled explicitly in Sec.~\ref{subsec:learning_effect}.
}

\subsection{FL over Blockchain}
\label{blockchain_based}
Blockchain integration provides a tamper-evident ledger for recording updates and a consensus mechanism for selecting accepted updates~\cite{Qu2020,Li2022}. In each epoch, after local training using \eqref{eq:mini-batch}, node \(i\) signs and broadcasts \(\theta_i^{t,K}\) as a transaction. Other nodes or miners validate received updates, often using a public validation set. Accepted updates are included in a block, and the block records both accepted transactions and an aggregate. This improves auditability relative to a trusted server, but global propagation, block construction, and confirmation delay introduce latency and throughput bottlenecks that are restrictive for gossip-native DFL.

\begin{figure}[htbp]
  \centering
  \begin{subfigure}[b]{0.32\columnwidth}
    \centering
    \includegraphics[width=\linewidth]{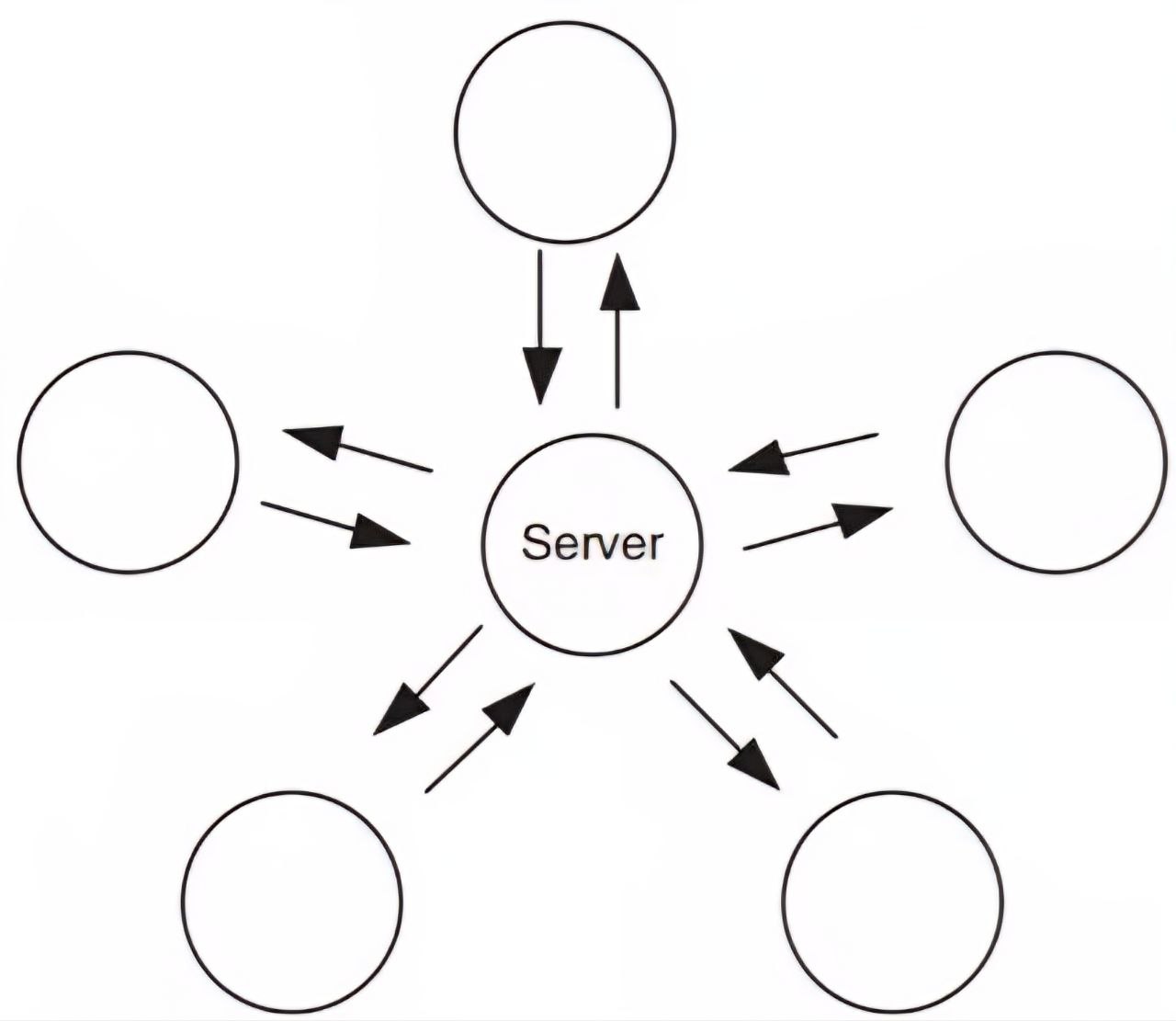}
    \caption{Traditional centralized FL}
    \label{fig:sub1}
  \end{subfigure}%
  \hfill
  \begin{subfigure}[b]{0.32\columnwidth}
    \centering
    \includegraphics[width=\linewidth]{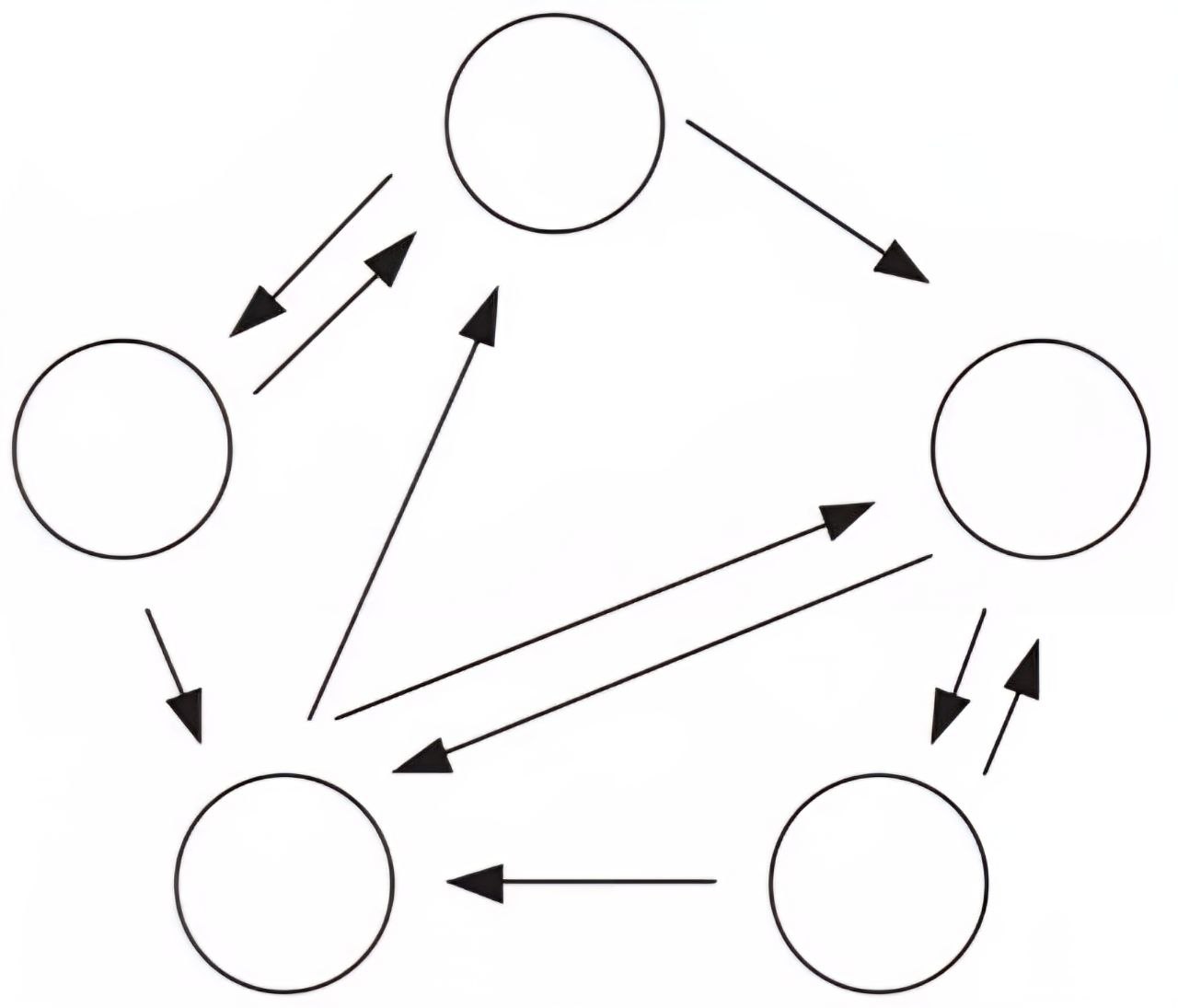}
    \caption{Gossip-based DFL}
    \label{fig:sub2}
  \end{subfigure}%
  \hfill
  \begin{subfigure}[b]{0.32\columnwidth}
    \centering
    \includegraphics[width=\linewidth]{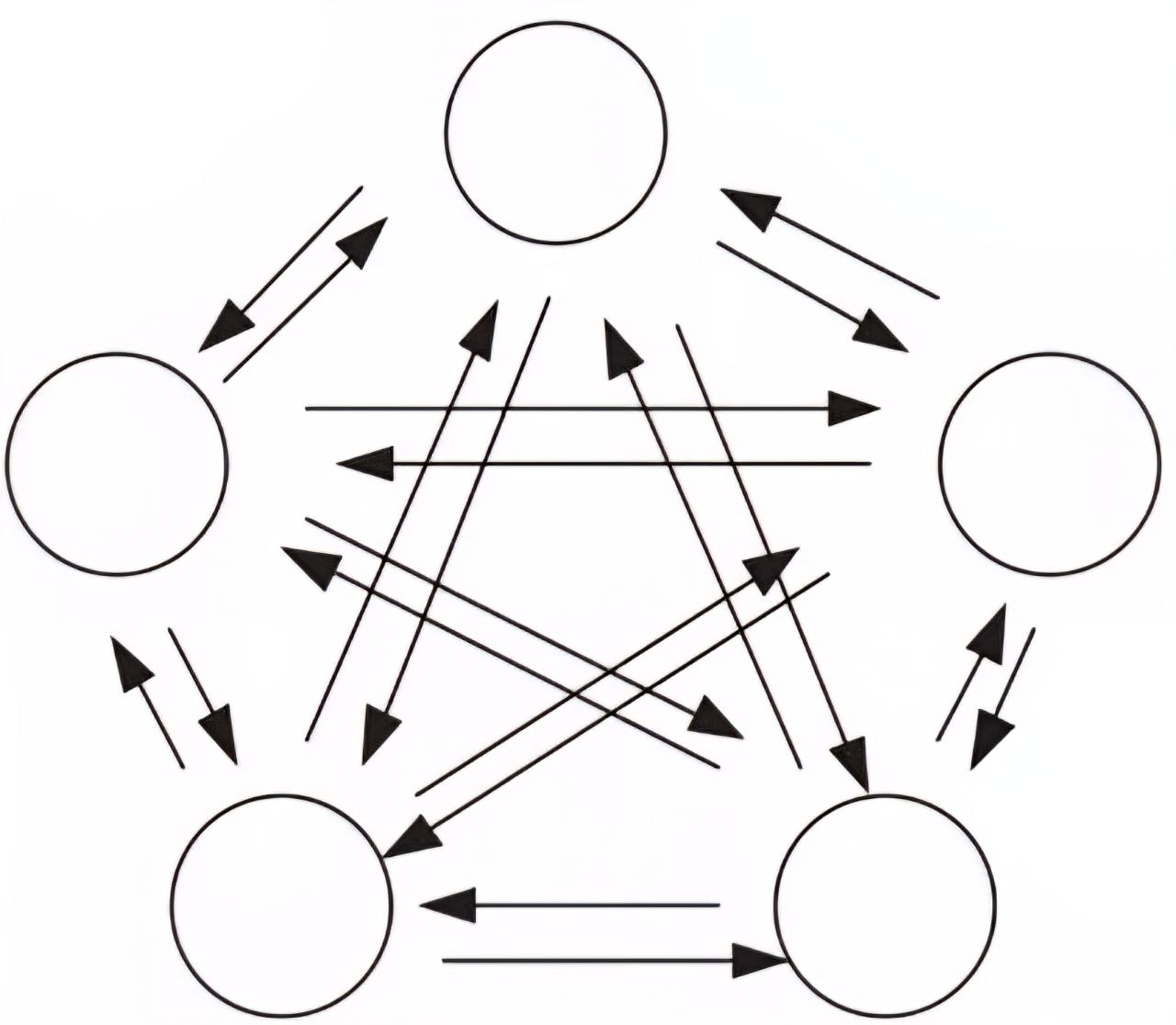}
    \caption{Blockchain-based FL}
    \label{fig:sub3}
  \end{subfigure}
  \caption{Communication protocols for different FL approaches.}
  \label{fornow}
\end{figure}

\section{Gossip-based FL Over Hashgraph DAG (gspDAG-FL)}
\subsection{System Architecture}
\label{sec:architecture}

In gspDAG-FL, all nodes train and gossip as light nodes, while a subset \(\mathcal{F}\subseteq\{1,\ldots,N\}\) also act as full nodes. Full nodes reconstruct a Topology DAG from compact certificates and run virtual voting over that DAG. The data channel carries model payloads between neighbors. The control channel carries event certificates, accepted gossip proofs, equivocation reports, confirmation vectors, and termination certificates.\par

\blue{
The control plane certifies provenance-admissible origin tuples. It does not certify that a model is semantically safe, and it does not force all local parameters to be identical. Semantic safety is checked locally after finality; aggregation eligibility is therefore the result of both control-plane certification and private post-consensus audit.
}

\subsubsection{Events, origin tuples, and edge convention}
Each epoch \(t\) is divided into ripples \(r=0,1,\ldots,R_t\). Node \(i\) creates one signed event certificate per ripple:
\[
    \chi_i^t(r)
    =
    \mathrm{Sig}_i
    \bigl(
    i,t,r,h_i^t(r-1),\mathrm{type},\mathrm{meta}
    \bigr),
\]
where \(h_i^t(r-1)=h(e_i^t(r-1))\) is the receiver self-parent hash. This certificate is sent to full nodes every ripple, including empty ripples. Thus, full nodes can reconstruct both non-empty events and empty heartbeat events. Empty events have a self-parent but no gossip parent.\par

The genesis event \(e_i^t(0)\) contains the corrected model \(z_i^t\). Its unique model-origin tuple is
\begin{equation}\label{eq:origin_tuple}
    \omega_i^t
    \triangleq
    \bigl(i,t,h(e_i^t(0)),h(z_i^t)\bigr).
\end{equation}
All finality decisions are over tuples \(\omega\), not over bare node IDs. If full nodes observe two conflicting genesis tuples with the same origin node and epoch, that origin is marked equivocated for epoch \(t\), and all conflicting tuples from that origin are excluded from certification.\par

For an accepted transmission from sender \(a\) to receiver \(i\) in ripple \(r\), the sender first signs a send proof
\[
    \pi_{a\to i}^{t,\mathrm{send}}(r)
\]
binding sender, receiver, epoch, ripple, sender-parent hash \(h(e_a^t(r-1))\), receiver self-parent hash \(h(e_i^t(r-1))\), origin tuple \(\omega\), and model hash. If node \(i\) verifies the proof, validates the payload, and has not already observed \(\omega\), it signs a receiver endorsement
\[
    \sigma_i^{t,\mathrm{recv}}(r)
\]
over the same metadata and an acceptance flag. The accepted proof is
\[
    \Pi_{a\to i}^t(r)
    =
    \bigl(
    \pi_{a\to i}^{t,\mathrm{send}}(r),
    \sigma_i^{t,\mathrm{recv}}(r),
    \chi_i^t(r)
    \bigr).
\]
A duplicate origin tuple is not counted again for readiness; the receiver creates a heartbeat event instead of a new origin-observation event. If the same receiver signs multiple distinct event certificates for the same epoch and ripple, full nodes mark the receiver as equivocating and discard the conflicting event certificates for that decision instance.\par

We use the parent-to-child edge convention. A self edge is
\[
    e_i^t(r-1)\longrightarrow e_i^t(r),
\]
and an accepted gossip-parent edge is
\[
    e_a^t(r-1)\longrightarrow e_i^t(r).
\]
Hence, all DAG edges increase the ripple index, so the graph is acyclic. Virtual voting uses ancestor reachability: event \(e\) sees event \(u\) if there is a directed path \(u\to^\star e\), equivalently if one reaches \(u\) by following parent links backward from \(e\).

\begin{figure}[!t]
    \centering
    \includegraphics[width=\columnwidth]{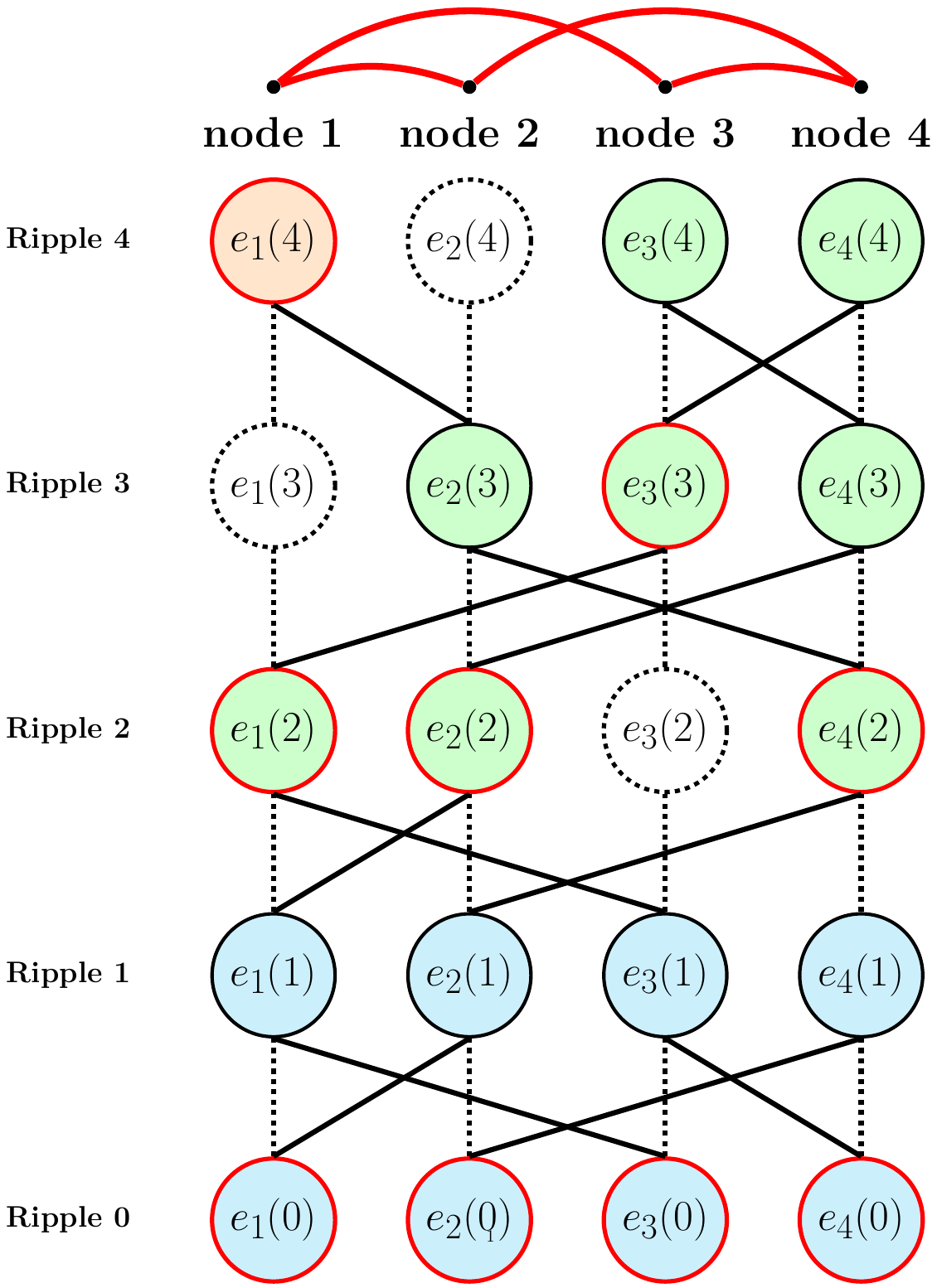}
    \caption{Illustration of the proposed Topology DAG with \(N=4\) nodes over five ripples. Cross-column solid edges denote accepted gossip-parent links; same-column dashed edges denote self-predecessor links. Colors indicate moment values. Red-outlined events are voting events, and dashed circles denote empty heartbeat events.}
    \label{dag}
\end{figure}

\subsection{Light-node Duties}

\subsubsection{Storage}
Each light node stores its own event chain, accepted payloads, observed origin tuples, and received parent payloads needed for validation, forwarding, audit, and aggregation. It does not store the full Topology DAG. Model tensors therefore remain distributed along data-plane paths rather than being replicated at every node.

\subsubsection{Communication}
At ripple \(r\ge1\), if node \(i\) has a non-empty event \(e_i^t(r-1)\), it selects a neighbor \(j\in\mathcal{N}_i\), sends the stored model payload and a fresh send proof, and then creates its own event certificate. If it has no payload to forward, it still creates and sends a heartbeat event certificate to full nodes. A receiver accepts at most one incoming payload per ripple under a deterministic tie rule. It rejects invalid, stale, duplicate, or conflicting payloads and creates a heartbeat event in that ripple.

\subsection{Full-node Duties}

Full nodes verify event certificates, send proofs, receiver endorsements, self-parent hashes, gossip-parent hashes, origin tuples, and duplicate/equivocation rules. Valid event certificates add vertices and self edges to the Topology DAG. Valid accepted proofs add gossip-parent edges. Invalid messages are ignored and, if equivocation is detected, the conflicting creator or origin is marked invalid for the corresponding epoch.\par

Full nodes run virtual voting locally over their reconstructed DAG prefixes. They then exchange compact signed confirmation vectors and termination certificates. Thus, virtual voting is used to infer local confirmation from DAG structure, while the full-node certificate exchange provides quorum finality.

\subsection{Workflow of gspDAG-FL}
\label{subsec:ripples}

Each epoch \(t\) has three stages.

\subsection*{\textbf{Stage 1: Local model update}}
Node \(i\) initializes \(\theta_i^{t,0}=\theta_i^{t-1}\), performs \eqref{SL_main}--\eqref{SL_intermediate}, obtains \(z_i^t\), creates genesis event \(e_i^t(0)\), and defines \(\omega_i^t\) from \eqref{eq:origin_tuple}.

\subsection*{\textbf{Stage 2: Gossip over ripples}}

\subsubsection*{\textsf{Step 1: Data-channel gossip}}
Each node with a non-empty previous event sends one payload and proof to one selected neighbor. The receiver checks the proof, payload hash, duplicate rule, equivocation rule, and payload-validation tests. If accepted, it signs a receiver endorsement and records the origin tuple; otherwise it records only a heartbeat event.

\subsubsection*{\textsf{Step 2: Control-channel topology sync}}
Every node sends its event certificate to full nodes. If it accepted a payload, it also sends the accepted proof. Full nodes verify the messages and update the Topology DAG.

\subsubsection*{\textsf{Step 3: Termination check}}
Full nodes run virtual voting over provenance-admissible origin tuples. Let \(C^t(r)\) be the set of control-plane certified origin tuples by the end of ripple \(r\). If enough nodes are ready, full nodes broadcast termination certificates. A termination certificate matches on the tuple \((t,r,C^t(r))\). DAG-prefix digests may be included for audit but are not required to match for termination, since honest full nodes can have different extra non-decisive proofs.

\subsection*{\textbf{Stage 3: Semantic audit and model aggregation}}
Let \(\mathcal{O}_i^t(r)\) be the set of origin tuples observed and stored by node \(i\). After termination, node \(i\) forms the certified-and-observed set
\[
    \mathcal{A}_i^t
    =
    C^t(r)\cap \mathcal{O}_i^t(r).
\]
It then applies the semantic audit in Sec.~\ref{sssec:semantic_audit}, producing
\[
    \widetilde{\mathcal{A}}_i^t\subseteq \mathcal{A}_i^t.
\]
For each \(\omega\in\widetilde{\mathcal{A}}_i^t\), let \(z_\omega\) denote the corresponding stored payload. Aggregation uses weights over certified observed origin tuples, not one-hop neighbor weights:
\begin{equation}\label{ours_agg}
    \theta_i^t
    =
    \sum_{\omega\in\widetilde{\mathcal{A}}_i^t}
    \tilde w_{i\omega}^t z_\omega,
    \qquad
    \tilde w_{i\omega}^t
    =
    \frac{\alpha_{i\omega}^t}
    {\sum_{\omega'\in\widetilde{\mathcal{A}}_i^t}\alpha_{i\omega'}^t}.
\end{equation}
Here \(\alpha_{i\omega}^t>0\) may encode trust, freshness, validation score, or uniform weighting; the simulations use uniform \(\alpha_{i\omega}^t=1\). If \(\widetilde{\mathcal{A}}_i^t=\varnothing\), node \(i\) keeps \(\theta_i^t=\theta_i^{t-1}\). This avoids reusing the one-hop matrix \(W\) for multi-hop certified origins.

\section{Model Validation and Consensus}

\subsection{Consensus Process by Full Nodes}
\label{sec:virtual-voting}

Consensus is performed over origin tuples \(\omega\), not over node IDs or unconstrained model names. For notational simplicity, the epoch index is suppressed when clear. Let \(N_F=|\mathcal{F}|\). Define
\begin{equation}\label{eq:quorum_sizes}
    \tau_N=\left\lfloor\frac{2N}{3}\right\rfloor+1,
    \qquad
    \tau_F=\left\lfloor\frac{2N_F}{3}\right\rfloor+1.
\end{equation}
The control-plane model assumes authenticated channels, deterministic verification, fewer than \(N_F/3\) Byzantine full nodes, and at most \(\lfloor(N-1)/3\rfloor\) Byzantine nodes. Lazy nodes are learning-invalid but control-correct unless explicitly stated otherwise.

\subsubsection{Definitions}
For an event \(e\), let \(s(e)\) be its self-parent and \(\mathcal{P}\{e\}\) its gossip parent if one exists. Since the stored edge direction is parent-to-child, \(u\preceq e\) means \(u\) is an ancestor of \(e\), i.e., \(u\to^\star e\). At genesis, \(\mathcal{M}\{e_i(0)\}=1\). For an empty heartbeat event, \(\mathcal{M}\{e_i(r)\}=\mathcal{M}\{s(e_i(r))\}\), but the event is non-voting and is not included in a reachable voting set.\par

For a non-empty event \(e_j(r)\), let
\begin{equation}\label{eq:base_moment}
  M=
  \max\big(
  \mathcal{M}\{s(e_j(r))\},
  \mathcal{M}\{\mathcal{P}\{e_j(r)\}\}
  \big).
\end{equation}
The set \(\mathcal{R}\{e_j(r)\}\) contains the earliest reachable eligible events of moment \(M\), with at most one event per creator. An eligible event is a genesis event if \(M=1\), and a non-empty voting event if \(M>1\). If multiple reachable eligible events of moment \(M\) are created by the same node, the one with the smallest ripple index is retained.\par

A non-empty event is a voting event if
\begin{equation}\label{eq:voting_event_condition}
  |\mathcal{R}\{e_j(r)\}|\geq \tau_N .
\end{equation}
Its moment is
\begin{equation}\label{eq:M_mapping}
  \mathcal{M}\{e_j(r)\} =
  \begin{cases}
    M+1, & \text{if } e_j(r) \text{ is voting},\\
    M,   & \text{otherwise}.
  \end{cases}
\end{equation}

For a voting event $e_j(r)$, its virtual vote on origin tuple $\omega$
is defined in \eqref{eq:virtual_vote}, where $g(\omega)$ is the genesis
event associated with $\omega$. The intermediate confirmation vote is
defined in \eqref{eq:intermediate_confirmation}.

\begin{figure*}[!b]
\hrule
\vspace{0.6\baselineskip}

\begin{equation}
\label{eq:virtual_vote}
\begin{aligned}
\mathcal{V}(e_j(r),\omega)
=
\begin{cases}
1,
&
\mathcal{M}\{e_j(r)\}=2
\text{ and } g(\omega)\in\mathcal{R}\{e_j(r)\},
\\[2pt]
1,
&
\mathcal{M}\{e_j(r)\}>2
\text{ and }
\displaystyle
\sum_{e\in\mathcal{R}\{e_j(r)\}}
\mathcal{V}(e,\omega)
>
|\mathcal{R}\{e_j(r)\}|/2,
\\[3pt]
0,
&
\text{otherwise}.
\end{cases}
\end{aligned}
\end{equation}

\vspace{+0.8\baselineskip}

\begin{equation}
\label{eq:intermediate_confirmation}
\begin{aligned}
\mathcal{U}(e_j(r),\omega)
=
\begin{cases}
1,
&
\mathcal{M}\{e_j(r)\}>2
\text{ and }
\displaystyle
\sum_{e\in\mathcal{R}\{e_j(r)\}}
\mathcal{V}(e,\omega)
\geq \tau_N,
\\[2pt]
0,
&
\text{otherwise}.
\end{cases}
\end{aligned}
\end{equation}

\vspace{-0.4\baselineskip}
\end{figure*}

Let \(\mathcal{O}_j(r)\) be the set of origin tuples observed by node \(j\) up to ripple \(r\). Duplicates do not enlarge \(\mathcal{O}_j(r)\).

\subsubsection{Local confirmation and certificate exchange}
Let \(C(r)\) be the certified set by the end of ripple \(r\), with \(C(0)=\varnothing\). A full node locally confirms \(\omega\notin C(r-1)\) if there exists a voting event \(e_j(r)\) with
\[
    \mathcal{U}(e_j(r),\omega)=1.
\]
Full nodes exchange compact local-confirmation vectors. An origin tuple \(\omega\) enters \(C(r)\) if at least \(\tau_F\) full nodes locally confirm it. Node \(j\) is ready if
\begin{equation}\label{eq:ready_condition}
    |C(r)\cap \mathcal{O}_j(r)|\geq Q .
\end{equation}
If at least \(\tau_N\) nodes are ready, a full node signs a termination certificate for \((t,r,C(r))\). A light node terminates after receiving at least \(\tau_F\) matching certificates. This step is a compact full-node certificate exchange; virtual voting itself is the local inference of confirmation from the Topology DAG.

\subsection{Consensus Properties}
\label{subsec:consensus_properties}

\blue{
The next results concern provenance finality over origin tuples. They do not assert semantic validity, and they do not assert identical local model parameters after every epoch. Proofs are in Appendix~\ref{app:consensus_proofs}.
}

\blue{
\begin{lem}[Well-defined virtual voting]
\label{lem:well_defined_voting}
For any finite Topology DAG constructed from valid event certificates and accepted proofs, the quantities \(\mathcal{M}\{e\}\), \(\mathcal{R}\{e\}\), \(\mathcal{V}(e,\omega)\), and \(\mathcal{U}(e,\omega)\) are uniquely defined for every event \(e\) and every non-equivocated origin tuple \(\omega\).
\end{lem}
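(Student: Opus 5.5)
The plan is strong induction over a topological order of the finite Topology DAG. Because every edge is parent-to-child and increases the ripple index, I will use the ripple index \(r\) as the induction key. The first task is to show that each event has a well-defined parent structure. Discarding equivocating event certificates guarantees at most one event per creator per ripple. The one-accepted-payload-per-ripple tie rule, together with the receiver endorsement, gives at most one gossip parent \(\mathcal{P}\{e\}\). The self-parent \(s(e)\) is fixed by the hash \(h_i^t(r-1)\) inside the signed certificate. So every non-genesis event has a unique self-parent, at most one gossip parent, and both lie at ripple \(r-1\). I will also check that the ancestor relation \(\preceq\) is a well-defined partial order: acyclicity follows from the strictly increasing ripple index, and finiteness makes the reachable set of each event finite and determined by the DAG alone.

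The induction hypothesis at ripple \(r\) is that, for every event at ripple \(r' < r\), \(\mathcal{M}\), \(\mathcal{R}\), the voting status, and \(\mathcal{V}(\cdot,\omega)\) are uniquely defined.

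\emph{Base case.} Genesis events have \(\mathcal{M}=1\) by definition. They are eligible events with no voting status needed.

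\emph{Empty heartbeat events.} For \(r\ge1\), \(\mathcal{M}\{e\}=\mathcal{M}\{s(e)\}\), which is defined by hypothesis. Such an event is non-voting, so \(\mathcal{V}=\mathcal{U}=0\).

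\emph{Non-empty events.}
\begin{enumerate}
\item The base moment \(M\) in \eqref{eq:base_moment} is a maximum of at most two already-defined integers.
\item The eligibility of each ancestor is already determined: it is a genesis event when \(M=1\), or a non-empty voting event of moment \(M\) otherwise.
\item \(\mathcal{R}\{e\}\) is obtained from the finite ancestor set by filtering to eligible events of moment \(M\) and keeping, for each creator, the one with the smallest ripple index. This choice is unique because each creator has at most one event per ripple.
\item The voting condition \eqref{eq:voting_event_condition} and \eqref{eq:M_mapping} then fix \(\mathcal{M}\{e\}\).
\end{enumerate}

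For votes, I will fix a non-equivocated \(\omega\). The genesis event \(g(\omega)\) is then unique, since conflicting genesis tuples were excluded, so the membership test \(g(\omega)\in\mathcal{R}\{e\}\) is well-defined. When \(\mathcal{M}\{e\}>2\), every member of \(\mathcal{R}\{e\}\) sits at a strictly smaller ripple and is a voting event. Its vote \(\mathcal{V}(\cdot,\omega)\) is therefore defined by the induction hypothesis, so the finite sums in \eqref{eq:virtual_vote} and \eqref{eq:intermediate_confirmation} and the resulting threshold comparisons are determined. This closes the induction.

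The main obstacle is ensuring that the recursion in \eqref{eq:virtual_vote} only refers to events for which \(\mathcal{V}\) is defined by the first two cases. That requires each member of \(\mathcal{R}\{e\}\), for \(\mathcal{M}\{e\}>2\), to be a voting event of moment \(M=\mathcal{M}\{e\}-1\ge2\), rather than a genesis event. I will argue this from the eligibility rule: for \(M>1\), eligible events are non-empty voting events of moment \(M\). When \(M=2\) these have \(\mathcal{V}\) given by the first case; when \(M>2\) they fall under the recursive case, whose members again have strictly smaller ripple index. A subsidiary point is that the "earliest per creator" selection must not depend on the order in which a full node assembled its DAG prefix. It depends only on the ancestor set and ripple indices, and under the deterministic verification assumption these are intrinsic to the DAG.
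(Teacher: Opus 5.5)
Your proposal is correct and follows essentially the same argument as the paper. Like the paper, you induct on the ripple index for $\mathcal{M}$ and $\mathcal{R}$, get uniqueness of the earliest-per-creator choice from there being at most one valid event per creator per ripple, and determine votes recursively from already-defined lower-level votes. The only differences are minor: you fold $\mathcal{V}$ and $\mathcal{U}$ into the same ripple induction, using that members of $\mathcal{R}\{e\}$ are proper ancestors, whereas the paper runs a separate induction over moment; you also spell out the parent-uniqueness and $g(\omega)$-uniqueness points that the paper leaves implicit.
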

}

\blue{
\begin{theo}[Quorum-intersection safety]
\label{theo:quorum_safety}
Assume authenticated communication, deterministic verification, \(\tau_F=\lfloor2N_F/3\rfloor+1\), and fewer than \(N_F/3\) Byzantine full nodes. Then two control-correct nodes cannot accept two different termination certificates for the same epoch.
\end{theo}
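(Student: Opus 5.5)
We argue by contradiction, using the standard quorum-intersection counting of PBFT-style protocols. Suppose two control-correct light nodes $u$ and $v$ accept termination certificates for the same epoch $t$ that differ, namely for tuples $(t,r,C)$ and $(t,r',C')$ with $(r,C)\neq(r',C')$. By the termination rule, $u$ holds at least $\tau_F$ matching signed certificates for $(t,r,C)$, and $v$ holds at least $\tau_F$ matching signed certificates for $(t,r',C')$. Authenticated communication and deterministic verification ensure that each counted certificate really was signed by the full node it names. Hence there are sets $S,S'\subseteq\mathcal{F}$ with $|S|,|S'|\ge\tau_F$ such that every member of $S$ signed the first tuple and every member of $S'$ signed the second.

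\textbf{Step 1 (counting).} Since $\tau_F=\lfloor 2N_F/3\rfloor+1>2N_F/3$, we get
\[
|S\cap S'|\ \ge\ 2\tau_F-N_F\ >\ N_F/3 .
\]
Because fewer than $N_F/3$ full nodes are Byzantine, $S\cap S'$ contains at least one control-correct full node $f$.

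\textbf{Step 2 (single signature per epoch).} We must show that $f$ never signs two different termination certificates for the same epoch. This is the step needing care, because the protocol text only says a full node signs ``if at least $\tau_N$ nodes are ready.'' I would make explicit, as part of the protocol semantics, the rule that a control-correct full node signs at most one termination certificate per epoch. Concretely, it signs at the first ripple $r$ at which its readiness check succeeds, for the value $C(r)$ it then holds, and afterwards it halts the decision instance for epoch $t$.

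If instead the intended reading allows one certificate per ripple, the plan is to interpret ``the same epoch'' as the same decision instance $(t,r)$. In that case I would argue that $C(r)$ is a deterministic function of the signed confirmation vectors. An origin tuple enters $C(r)$ only with $\tau_F$ signed local confirmations, and Lemma~\ref{lem:well_defined_voting} makes each local confirmation unique given the DAG prefix. By the same intersection bound, two correct full nodes cannot disagree on membership of any $\omega$ in $C(r)$. This also covers equivocated origins, which are excluded deterministically.

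\textbf{Step 3 (conclusion).} With either reading, $f$ signed both $(t,r,C)$ and $(t,r',C')$, which contradicts Step 2. Therefore both control-correct nodes accepted the same certificate.

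The main obstacle is Step 2: fixing the protocol semantics so that a correct full node's signed certificate is unique. The second variant is harder. It needs the claim that the sets $C(r)$ computed by correct full nodes coincide. That claim requires that confirmation vectors are signed and that each correct node counts the same $\tau_F$-threshold over the same authenticated vectors. I would formalize this through a ``signed-once'' lemma for confirmation vectors, proved by the same intersection argument as Step 1.
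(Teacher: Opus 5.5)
Your first reading is essentially the paper's proof. The paper bounds $|\mathcal{Q}_1\cap\mathcal{Q}_2|\ge 2\tau_F-N_F>N_F/3$, using a case check over $N_F\bmod 3$ where you use $\lfloor x\rfloor+1>x$, and then invokes without further justification that a control-correct full node signs at most one termination certificate per epoch. You are right to make that rule explicit, since the protocol text does not state it.

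Drop the per-ripple fallback. It gives uniqueness only per $(t,r)$, not per epoch, so it does not prove the stated theorem, and your Step 3 claim that it yields a contradiction ``with either reading'' fails when $r\neq r'$. Its claim that correct full nodes agree on $C(r)$ is also unnecessary for safety. It can fail when Byzantine full nodes reveal their confirmation vectors to only some correct full nodes.
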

}

\blue{
\begin{theo}[Virtual-voting consistency]
\label{theo:vv_consistency}
Consider two control-correct full nodes whose Topology DAG prefixes contain the same valid event-certificate and accepted-proof set up to ripple \(r\). Then their locally computed \(\mathcal{M}\), \(\mathcal{R}\), \(\mathcal{V}\), \(\mathcal{U}\), and local-confirmation vectors at ripple \(r\) are identical.
\end{theo}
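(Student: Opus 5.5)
The idea is to show that every quantity in the statement is a deterministic function of the verified DAG prefix, and then argue by induction along a topological order.

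First, I will make the input explicit. By deterministic verification, each control-correct full node applies the same acceptance rules to the same set of event certificates and accepted proofs. These rules cover signatures, self-parent and gossip-parent hashes, the duplicate rule, and the equivocation marking. So both nodes end up with the same vertex set, the same self edges, the same gossip-parent edges, and the same set of equivocated creators and origin tuples. Equivocation is decided from the multiset of certificates alone, not from arrival order, so it is also identical. Consequently the two Topology DAG prefixes up to ripple \(r\) are equal as labelled graphs. This includes which events are empty heartbeats and the map \(g(\omega)\) for every non-equivocated \(\omega\).

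Second, I will prove by strong induction on the ripple index \(r'\le r\) that \(\mathcal{M}\{e\}\), \(\mathcal{R}\{e\}\), \(\mathcal{V}(e,\omega)\) and \(\mathcal{U}(e,\omega)\) agree at both nodes for every event of ripple \(r'\). Every DAG edge increases the ripple index, so the parents of \(e\) and all events in its ancestor set lie in strictly earlier ripples. The base case \(r'=0\) is immediate, since genesis events have \(\mathcal{M}=1\).

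For the inductive step, take an event \(e\) at ripple \(r'\).
\begin{itemize}
\item The base moment \(M\) in \eqref{eq:base_moment} is a max of parent moments, which agree by the inductive hypothesis.
\item The ancestor set of \(e\) is a function of the common edge set. The set of eligible events of moment \(M\) among those ancestors is therefore common too.
\item \(\mathcal{R}\{e\}\) is fixed by keeping at most one event per creator, with the smallest-ripple tie rule. Two events of the same creator cannot share a ripple, since equivocating certificates were discarded. So this selection is unique, which is the content of Lemma~\ref{lem:well_defined_voting}.
\item The voting predicate \eqref{eq:voting_event_condition} and the moment \eqref{eq:M_mapping} then coincide at both nodes.
\item The votes \eqref{eq:virtual_vote} and \eqref{eq:intermediate_confirmation} depend only on \(\mathcal{M}\{e\}\), on \(\mathcal{R}\{e\}\), and on \(\mathcal{V}(e',\omega)\) for \(e'\in\mathcal{R}\{e\}\). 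Each such \(e'\) lies in an earlier ripple, so those votes agree by induction.
\end{itemize}

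Finally, the local-confirmation vector at ripple \(r\) is the indicator, over \(\omega\notin C(r-1)\), of the existence of a voting event \(e_j(r)\) with \(\mathcal{U}(e_j(r),\omega)=1\). Given the previous step, this vector is identical at both nodes provided \(C(r-1)\) is common. I will handle \(C(r-1)\) by an outer induction on \(r\), starting from \(C(0)=\varnothing\). At each ripple, \(C(\cdot)\) is determined by the exchanged confirmation vectors and the \(\tau_F\) threshold. If this dependence on certificate exchange is not meant to be covered, I will instead state the consistency claim relative to a common \(C(r-1)\).

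The main obstacle is the first step. I must argue carefully that "same valid certificate and proof set" yields literally the same graph, with the same equivocation exclusions and the same deterministic per-ripple tie rule for receivers. Arrival-order effects must not leak into the DAG. I would handle this by defining the DAG as a canonical function of the verified set.
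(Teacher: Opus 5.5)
Your proposal is correct and follows the paper's own route: deterministic verification of the same certificate and proof set gives an identical Topology DAG prefix, and the ripple-indexed induction of Lemma~\ref{lem:well_defined_voting} then forces identical $\mathcal{M}$, $\mathcal{R}$, $\mathcal{V}$, $\mathcal{U}$, and hence identical local-confirmation vectors.

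Your attention to the dependence on $C(r-1)$ is more careful than the paper, which silently treats it as common. However, the outer induction you sketch would not close on its own, because $C(\cdot)$ is assembled from the confirmation vectors of all $N_F$ full nodes, and Byzantine ones may send different signed vectors to different peers. Your fallback, stating the claim relative to a common $C(r-1)$, is therefore the version that actually holds.
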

}

\blue{
\begin{theo}[Conditional liveness]
\label{theo:conditional_liveness}
Assume eventual delivery of valid control messages, bounded control-correct processing delay, connected control-correct gossip paths, and fewer than \(N_F/3\) Byzantine full nodes. Suppose there exists a finite ripple \(r^\star\) at which at least \(\tau_N\) control-correct nodes have each observed at least \(Q\) non-equivocated origin tuples that pass payload validation and are confirmable from the valid Topology DAG. Then every control-correct node eventually receives a valid termination certificate and exits the gossip stage.
\end{theo}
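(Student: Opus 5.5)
The argument follows the protocol pipeline forward from the ripple \(r^\star\) given in the statement: first the DAG becomes common, then confirmations arise, then readiness, then certificates. There are four steps.

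\textbf{Step 1: common DAG prefix.} By eventual delivery and bounded control-correct processing delay, there is a finite ripple \(r_1\ge r^\star\) by which every control-correct full node holds all valid event certificates and accepted proofs up to \(r^\star\). These come from control-correct creators, so Byzantine nodes cannot suppress them. Byzantine creators can only add extra proofs, or equivocated ones that the verification rules discard. The decisive part of the prefix, the events of control-correct creators, is therefore common to all control-correct full nodes. Theorem~\ref{theo:vv_consistency} then gives identical \(\mathcal{M}\), \(\mathcal{R}\), \(\mathcal{V}\), \(\mathcal{U}\) on that common part, and Lemma~\ref{lem:well_defined_voting} ensures these quantities are well defined.

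\textbf{Step 2: confirmation becomes certification.} The hypothesis says each relevant origin \(\omega\) is \emph{confirmable from the valid Topology DAG}. I read this as: some voting event \(e\) with \(\mathcal{U}(e,\omega)=1\) exists in the valid DAG by some finite ripple. By Step 1, every control-correct full node locally confirms \(\omega\). There are more than \(N_F-N_F/3\ge\tau_F\) control-correct full nodes, so their confirmation vectors, delivered in bounded time, push \(\omega\) into \(C(r_2)\) for some finite \(r_2\). Byzantine full nodes cannot block this, since \(\tau_F\) correct confirmations suffice. Under the authors' quorum rule they also cannot introduce conflicting certified sets; Theorem~\ref{theo:quorum_safety} gives uniqueness at the certificate level.

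\textbf{Step 3: readiness and termination.} The sets \(\mathcal{O}_j(r)\) are monotone in \(r\), and duplicates do not enlarge them. Hence each of the at least \(\tau_N\) nodes in the hypothesis satisfies \(|C(r_2)\cap\mathcal{O}_j(r_2)|\ge Q\), i.e.\ the ready condition \eqref{eq:ready_condition}. Their readiness becomes known to all correct full nodes by finite delivery. Each control-correct full node then signs a termination certificate for \((t,r_2,C(r_2))\). These agree because the certified set is a deterministic function of the exchanged vectors. To make this airtight, I would fix the termination ripple as the first ripple at which the readiness count reaches \(\tau_N\) in the common view. At least \(\tau_F\) correct certificates match, so every control-correct light node receives \(\tau_F\) matching certificates by eventual delivery and exits. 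Connected control-correct gossip paths are used in Steps 1 and 3 to guarantee that the control messages, and the payloads underlying \(\mathcal{O}_j\), actually propagate.

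\textbf{Main obstacle.} The hardest step is ensuring that correct full nodes sign on the \emph{same} tuple \((t,r,C(r))\), despite asynchrony and Byzantine-injected extra proofs. I would first argue that certification depends only on correct creators' events plus the deterministic \(\tau_F\) threshold, and that extra Byzantine proofs cannot alter \(\mathcal{U}\) for the decisive origins. If that fails, I would fall back on a bounded-delay synchronization argument: correct nodes wait one delay bound before signing, so their views of \(C(r)\) coincide.
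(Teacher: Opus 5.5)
Your proposal follows the same route as the paper's proof: eventual delivery and bounded processing delay put the decisive proofs at every control-correct full node, Theorem~\ref{theo:vv_consistency} makes their confirmations agree, the at least \(\tau_F\) control-correct full nodes certify the relevant tuples, readiness of \(\tau_N\) nodes triggers matching termination certificates, and eventual delivery lets every control-correct node collect \(\tau_F\) of them. The obstacle you flag (Byzantine extras can make full prefixes differ, so Theorem~\ref{theo:vv_consistency} does not literally apply and agreement on \((t,r,C(r))\) is not automatic) is also only asserted in the paper; one small fix is that \(N_F-N_F/3\ge\tau_F\) is always false (e.g.\ \(N_F=3\) gives \(2\ge3\)), and the correct reason is that the number of control-correct full nodes is an integer strictly greater than \(2N_F/3\), hence at least \(\lfloor 2N_F/3\rfloor+1=\tau_F\).
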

}

\subsection{Robustness, Complexity, and Validation}
\label{other-validations}

Corrupted control messages are rejected because sender signatures, receiver endorsements, self-parent hashes, gossip-parent hashes, origin tuples, model hashes, epoch/ripple indices, and duplicate/equivocation rules must all verify. Lost data messages create heartbeat events and may increase the number of ripples. Lost control messages delay Topology-DAG reconstruction but cannot create false edges. Thus, loss and delay affect latency and liveness, while certificate safety follows from Theorem~\ref{theo:quorum_safety}.\par

In each ripple, each node sends at most one model payload over the data channel, giving \(O(N)\) local payload transmissions. Each node also sends one event certificate to full nodes, and accepted payloads add accepted proofs; this costs \(O(NN_F)\) control messages per ripple. Full-node confirmation exchange costs \(O(N_F^2N)\) bits per ripple. With cached reachability bitsets, moment computation and virtual voting cost \(O(N^2)\) bit operations per full node per ripple. The Topology DAG stores \(O(NR_t)\) event metadata per epoch.

\subsubsection{Payload validation}
\label{sssec:payload_validation}
If node \(i\) receives \(z_\omega\), it first computes
\begin{equation}\label{eq:f1}
     f_1(z_\omega)=\|z_\omega-\theta_i^{t-1}\|_2.
\end{equation}
If \(f_1(z_\omega)<\epsstale\), the update is rejected as stale before any directional normalization. Otherwise, using the previously post-audit set, node \(i\) computes \(\mu_i^1,\sigma_i^1\) and rejects if
\[
    f_1(z_\omega)<L_i^1
    \quad\text{or}\quad
    f_1(z_\omega)>U_i^1,
\]
where
\[
    L_i^1=\max(\epsstale,\mu_i^1-3\sigma_i^1),
    \qquad
    U_i^1=\mu_i^1+3\sigma_i^1.
\]

For direction, define the valid previous direction pool by excluding updates with displacement below \(\epsstale\). If this pool is too small or if the mean direction norm is below \(\epsdir\), the directional filter is skipped for the current epoch and only the magnitude rule is used. Otherwise,
\begin{equation}\label{eq:f2}
     f_2(z_\omega)
     =
     \frac{
     \langle z_\omega-\theta_i^{t-1},\bar d_i^{t-1}\rangle
     }{\|z_\omega-\theta_i^{t-1}\|_2},
\end{equation}
where
\[
    \bar d_i^{t-1}
    =
    \frac{d_i^{t-1}}{\|d_i^{t-1}\|_2},
\]
and \(d_i^{t-1}\) is the average normalized direction over the valid previous pool. Node \(i\) rejects if \(f_2(z_\omega)<L_i^2=\mu_i^2-3\sigma_i^2\). These filters are first-stage screens inspired by trust and asynchronous robust validation~\cite{FLTrust2021,ZenoPP2020}, not universal Byzantine detectors.

\blue{
\begin{propo}[False-rejection control under concentrated honest statistics]
\label{prop:filter_false_rejection}
Fix node \(i\) and statistic \(f_m\), \(m\in\{1,2\}\). Suppose learning-clean values of \(f_m\) are sub-Gaussian with mean \(\bar\mu_m\) and scale \(\bar\sigma_m\), and suppose the empirical estimates satisfy
\[
|\mu_i^m-\bar\mu_m|\leq \varepsilon_\mu,\qquad
|\sigma_i^m-\bar\sigma_m|\leq \varepsilon_\sigma
\]
with probability at least \(1-\delta_n\). Then the false-rejection probability is bounded by
\[
    \Pr\{\text{clean rejection}\}
    \leq
    \delta_n+
    c_m
    \exp\!\left(
    -
    \frac{(3\bar\sigma_m-3\varepsilon_\sigma-\varepsilon_\mu)_+^2}
    {2\bar\sigma_m^2}
    \right),
\]
with \(c_1=2\) for the two-sided magnitude test and \(c_2=1\) for the one-sided directional test.
\end{propo}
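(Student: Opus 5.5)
The plan is to condition on the good event \(E_n=\{|\mu_i^m-\bar\mu_m|\le\varepsilon_\mu,\ |\sigma_i^m-\bar\sigma_m|\le\varepsilon_\sigma\}\), whose complement has probability at most \(\delta_n\). This gives
\[
\Pr\{\text{clean rejection}\}\le \delta_n+\Pr\{\text{clean rejection}\cap E_n\}.
\]
On \(E_n\), we replace the random data-dependent thresholds by deterministic ones that are worse (more aggressive at rejecting). We then apply the sub-Gaussian tail bound to a fresh learning-clean value \(X=f_m(z_\omega)\). The definition of sub-Gaussianity we use is \(\Pr\{X-\bar\mu_m\ge s\}\le \exp(-s^2/(2\bar\sigma_m^2))\), and the same for the lower tail. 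We treat \(X\) as independent of the estimates, or else simply use that the deterministic-threshold event contains the rejection event on \(E_n\), which needs no independence.

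\textbf{Directional test (\(m=2\), one-sided).} Rejection means \(X<\mu_i^2-3\sigma_i^2\). On \(E_n\),
\[
\mu_i^2-3\sigma_i^2\le \bar\mu_2+\varepsilon_\mu-3\bar\sigma_2+3\varepsilon_\sigma = \bar\mu_2-(3\bar\sigma_2-3\varepsilon_\sigma-\varepsilon_\mu).
\]
So rejection implies \(X-\bar\mu_2< -s\) with \(s=3\bar\sigma_2-3\varepsilon_\sigma-\varepsilon_\mu\). If \(s>0\), the lower tail gives \(\exp(-s^2/(2\bar\sigma_2^2))\). If \(s\le0\), the \((\cdot)_+\) makes the exponential equal to \(1\), so the bound is trivial. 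This yields \(c_2=1\).

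\textbf{Magnitude test (\(m=1\), two-sided).} Rejection is \(X<L_i^1\) or \(X>U_i^1\). The stale rule \(X<\epsstale\) and the \(\max\) in \(L_i^1\) need care. Since \(L_i^1=\max(\epsstale,\mu_i^1-3\sigma_i^1)\), the lower rejection event is \(\{X<\epsstale\}\cup\{X<\mu_i^1-3\sigma_i^1\}\).

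The second set is handled exactly as in the directional test. For the stale part, we invoke the modeling premise that learning-clean nodes do fresh training, so their displacement exceeds \(\epsstale\). That is, \(\Pr\{X<\epsstale\}\) for clean nodes is either zero by assumption or absorbed into \(\delta_n\). I expect this to be the main obstacle, because the statement does not explicitly assume it. I would make the assumption explicit, namely \(\epsstale\le \bar\mu_1-3\bar\sigma_1+\ldots\). Alternatively, I would interpret the "clean rejection" here as rejection by the statistical band, since the stale check is a deterministic pre-filter designed to pass fresh updates.

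The upper side works the same way. On \(E_n\),
\[
U_i^1\ge \bar\mu_1+(3\bar\sigma_1-3\varepsilon_\sigma-\varepsilon_\mu),
\]
so the upper tail gives the same exponential. A union bound over the two tails then gives \(c_1=2\). Combining this with the \(\delta_n\) term completes the proof.
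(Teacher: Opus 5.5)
Your proposal is correct and follows the paper's proof: condition on the estimation event \(\mathcal{E}_n\), turn each data-dependent threshold into a deterministic deviation of size \(3\bar\sigma_m-3\varepsilon_\sigma-\varepsilon_\mu\) from \(\bar\mu_m\), apply the sub-Gaussian tail(s), and take a union bound to get \(c_1=2\) and \(c_2=1\). Your caveat about the stale floor in \(L_i^1=\max(\epsstale,\mu_i^1-3\sigma_i^1)\) is well founded---the paper's proof simply asserts that lower rejection implies \(\bar\mu_1-X>3\bar\sigma_1-\varepsilon_\mu-3\varepsilon_\sigma\), which requires exactly your condition \(\epsstale\le\bar\mu_1-(3\bar\sigma_1-3\varepsilon_\sigma-\varepsilon_\mu)\). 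That condition is preferable to absorbing \(\Pr\{X<\epsstale\}\) into \(\delta_n\), which would change the statement.
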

}

\subsubsection{Accepted-proof validation}
\label{sssec:sync_reply}
Each node has a signing key and a public verification key. Full nodes insert an event only if the event certificate \(\chi_i^t(r)\) is valid and its self-parent is consistent. A gossip-parent edge is inserted only if the accepted proof \(\Pi_{a\to i}^t(r)\) verifies both the sender proof and receiver endorsement. Duplicate event certificates from the same creator in the same epoch/ripple trigger an equivocation mark. Conflicting genesis tuples from the same origin and epoch also trigger an equivocation mark and are excluded from certification. Standard Ed25519 signatures can implement these checks~\cite{RFC8032,Ed25519Bernstein2012}.

\subsubsection{Semantic consistency audit}
\label{sssec:semantic_audit}
After control-plane finality, node \(i\) audits each locally stored \(z_\omega\in\mathcal{A}_i^t\) using a private trigger-free validation set \(\mathsf{D}_i\). Let \(o(x;z)\) denote the output vector of model \(z\). Node \(i\) computes
\begin{equation}\label{eq:semantic_distance}
    d_i(z_\omega)
    =
    \max_{x\in\mathsf{D}_i}
    \left\|
    o(x;z_\omega)-o(x;\theta_i^{t-1})
    \right\|_2 .
\end{equation}
It removes updates whose score exceeds \(\mathrm{median}+3\,\mathrm{MAD}\), following robust outlier practice~\cite{Iglewicz1993,Leys2013}. If \(|\mathcal{A}_i^t|<4\), pruning is skipped. If all models exceed the threshold, the model with the smallest \(d_i\) is retained. This audit assumes the previous local reference has not already drifted beyond the audit tolerance; under persistent reference poisoning, the audit becomes empirical.

\blue{
\begin{propo}[Semantic separation]
\label{prop:semantic_separation}
Let \(\mathcal{H}_i^t\) and \(\mathcal{B}_i^t\) be the learning-clean and Byzantine tuples in \(\mathcal{A}_i^t\). Suppose \(|\mathcal{B}_i^t|<|\mathcal{A}_i^t|/2\), and there exist \(a_i^t<b_i^t\) such that
\[
    d_i(z_\omega)\le a_i^t,\quad \omega\in\mathcal{H}_i^t,
    \qquad
    d_i(z_\omega)\ge b_i^t,\quad \omega\in\mathcal{B}_i^t.
\]
If \(b_i^t>\mathrm{med}_i^t+3\,\mathrm{MAD}_i^t\ge a_i^t\), then the MAD audit removes all Byzantine tuples in \(\mathcal{B}_i^t\) and retains all learning-clean tuples in \(\mathcal{H}_i^t\).
\end{propo}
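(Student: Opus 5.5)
The proof is a direct threshold argument, and most of it is bookkeeping. Write \(T_i^t=\mathrm{med}_i^t+3\,\mathrm{MAD}_i^t\) for the audit threshold computed on the scores \(\{d_i(z_\omega):\omega\in\mathcal{A}_i^t\}\). By the rule in Sec.~\ref{sssec:semantic_audit}, a tuple is removed exactly when its score exceeds \(T_i^t\). There are two exceptions: the small-set case \(|\mathcal{A}_i^t|<4\) and the all-exceed fallback. I would dispose of these first and then check the two inequalities separately.

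\textbf{Case analysis.} For every \(\omega\in\mathcal{H}_i^t\) we have \(d_i(z_\omega)\le a_i^t\le T_i^t\) by hypothesis. So no learning-clean tuple exceeds the threshold, and none is pruned. For every \(\omega\in\mathcal{B}_i^t\) we have \(d_i(z_\omega)\ge b_i^t>T_i^t\). So every Byzantine tuple strictly exceeds the threshold and is pruned. If \(\mathcal{A}_i^t=\mathcal{H}_i^t\cup\mathcal{B}_i^t\), the retained set is then exactly \(\mathcal{H}_i^t\). Otherwise lazy tuples are present and their fate is left unspecified; the statement only claims something about \(\mathcal{H}_i^t\) and \(\mathcal{B}_i^t\).

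\textbf{Fallback and small-set cases.} The all-exceed fallback cannot be triggered. Since \(|\mathcal{B}_i^t|<|\mathcal{A}_i^t|/2\), the set \(\mathcal{A}_i^t\setminus\mathcal{B}_i^t\) is nonempty. If it contains a clean tuple, that tuple lies below \(T_i^t\), so not all models exceed the threshold. If it contains only non-clean, non-Byzantine tuples, I would note that the claim about \(\mathcal{H}_i^t\) holds vacuously. The majority hypothesis is the natural place to remark why the threshold is meaningful: the median is then determined by non-Byzantine scores, which makes \(\mathrm{med}_i^t+3\,\mathrm{MAD}_i^t\ge a_i^t\) a plausible condition. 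The proof itself, however, only uses the stated threshold inequalities. The small-set case \(|\mathcal{A}_i^t|<4\) is where pruning is skipped. Here I would either read the proposition as concerning the regime \(|\mathcal{A}_i^t|\ge4\) in which the MAD audit is applied, or observe that with at most three tuples and a strict Byzantine minority, \(\mathcal{B}_i^t\) has at most one element. I would state the former explicitly as the implicit scope.

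\textbf{Main obstacle.} The only real subtlety is this scoping, together with the strict versus non-strict comparison at the threshold. The removal rule uses "exceeds", which is strict, and that is compatible with \(a_i^t\le T_i^t<b_i^t\). I would spell this out carefully so that clean tuples scoring exactly \(T_i^t\) are retained.
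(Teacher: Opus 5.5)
Your proof is correct and is essentially the paper's: both rest on the same threshold comparison, $d_i(z_\omega)\le a_i^t\le T_i^t$ for clean tuples and $d_i(z_\omega)\ge b_i^t>T_i^t$ for Byzantine ones, with the honest-majority hypothesis serving only to explain why the threshold is meaningful, and your explicit restriction to $|\mathcal{A}_i^t|\ge4$ is a caveat the paper silently omits (with pruning skipped, a single Byzantine tuple among three would survive). The all-exceed fallback is handled more simply by noting that some score is at most $\mathrm{med}_i^t\le T_i^t$, so that branch never fires; this also closes the no-clean-tuple subcase you left open for $\mathcal{B}_i^t$.
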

}

\subsection{Learning Effect of Certified and Filtered Aggregation}
\label{subsec:learning_effect}

\blue{
Let \(H=|\mathcal{H}_{\mathrm{L}}|\) and \(\bar\theta^t=H^{-1}\sum_{i\in\mathcal{H}_{\mathrm{L}}}\theta_i^t\). Let \(W_t^{\mathrm{eff}}\) be the row-stochastic effective aggregation matrix induced by the certified observed set and the post-audit weights in \eqref{ours_agg}, restricted to learning-clean nodes and with invalid residual influence represented as a perturbation. We assume the effective mixing condition
\begin{equation}\label{eq:effective_mixing}
    \EX\!\left[
    \left\|
    P W_t^{\mathrm{eff}}
    \right\|_2^2
    \middle|\mathfrak{F}_t
    \right]
    \le \rho^2<1,
\end{equation}
where \(P=I-H^{-1}\bm{1}\bm{1}^{\top}\) and \(\mathfrak{F}_t\) is the training-history filtration before aggregation at epoch \(t\). This assumption replaces the fixed-matrix condition in ordinary DFL and matches the filtered, time-varying aggregation used by gspDAG-FL.
}

\blue{
We also assume the primal--dual tracking error is bounded:
\begin{equation}
\label{eq:tracking_assumption}
\begin{aligned}
&\frac{1}{HK}
\sum_{i\in\mathcal{H}_{\mathrm{L}}}
\sum_{k=0}^{K-1}
\EX\!\left[
\left\|
\EX[g_i^{t,k}\mid\mathfrak{F}_t]
-
\nabla F_i(\theta_i^{t,k})
\right\|_2^2
\right]
\\
&\hspace{38mm}
\le
c_\lambda(\Omega_t+\delta_t^2),
\end{aligned}
\end{equation}
where \(g_i^{t,k}\) is the stochastic gradient including the primal--dual correction, \(\Omega_t\) is the learning-clean disagreement energy defined in Appendix~\ref{app:validation_learning_proofs}, and \(c_\lambda<\infty\).
Let \(\xi_i^t=\eta K b_i^t\) be the parameter-space aggregation perturbation, with
\[
    \frac{1}{H}\sum_{i\in\mathcal{H}_{\mathrm{L}}}
    \EX\|b_i^t\|_2^2
    \le \delta_t^2.
\]
}

\blue{
\begin{theo}[Stationarity under certified time-varying gossip]
\label{theo:learning_convergence}
Assume learning-clean losses are \(L\)-smooth and lower bounded, stochastic gradients have variance at most \(\sigma_g^2\), heterogeneity is bounded by \(\zeta^2\), \eqref{eq:effective_mixing} holds, and the tracking condition \eqref{eq:tracking_assumption} holds. For sufficiently small \(\eta\),
\begin{align}
\frac{1}{T}
\sum_{t=0}^{T-1}
\EX\!\left[
\left\|
\nabla F_{\mathcal{H}_{\mathrm{L}}}(\bar\theta^t)
\right\|_2^2
\right]
&\le
O\!\left(
\frac{
F_{\mathcal{H}_{\mathrm{L}}}(\bar\theta^0)-F_{\inf}
}{
\eta KT
}
\right)
+
O(\eta\sigma_g^2)
\notag
\\
&\quad+
O\!\left(
\frac{\eta K\zeta^2}{(1-\rho)^2}
\right)
\notag
\\
&\quad+
O\!\left(
\frac{1}{T}
\sum_{t=0}^{T-1}
\delta_t^2
\right).
\end{align}
Thus, with \(\eta=\Theta(T^{-1/2})\) and bounded average perturbation, gspDAG-FL converges to a stationary neighborhood of \(F_{\mathcal{H}_{\mathrm{L}}}\).
\end{theo}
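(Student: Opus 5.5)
The plan is the standard decentralized-SGD analysis with time-varying mixing, as in Koloskova-type unified analyses. We track two quantities: the learning-clean average \(\bar\theta^t\), and the disagreement energy
\[
\Omega_t=\frac{1}{H}\sum_{i\in\mathcal{H}_{\mathrm{L}}}\EX\|\theta_i^t-\bar\theta^t\|_2^2 .
\]
Stack the learning-clean iterates as \(\Theta^t\in\mathbb{R}^{H\times d}\). Write the epoch map as
\[
\Theta^{t+1}=W_t^{\mathrm{eff}}\bigl(\Theta^t-\eta K G^t\bigr)+\Xi^t,
\]
where \(G^t\) collects the \(K\)-step averaged corrected gradients \(\frac1K\sum_k g_i^{t,k}\), and \(\Xi^t\) stacks the perturbations \(\xi_i^t=\eta K b_i^t\).

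\textbf{Step 1: descent lemma for the average.} Averaging over learning-clean rows, the row-stochastic \(W_t^{\mathrm{eff}}\) is not column-stochastic. The deviation \(\frac1H\bm 1^\top(W_t^{\mathrm{eff}}-I)\) is absorbed into the perturbation, consistent with how the paper defines the invalid residual influence. This gives
\[
\bar\theta^{t+1}=\bar\theta^t-\eta K\bar g^t+\eta K\bar b^t .
\]
Apply \(L\)-smoothness of \(F_{\mathcal{H}_{\mathrm{L}}}\) and split \(\EX[\bar g^t\mid\mathfrak F_t]\) into three parts:
\begin{itemize}
\item the true gradient \(\nabla F_{\mathcal{H}_{\mathrm{L}}}(\bar\theta^t)\);
\item the bias from \(\theta_i^{t,k}\neq\bar\theta^t\), controlled by \(L^2\) times \(\Omega_t\) plus a local-drift term of order \(\eta^2K^2(\sigma_g^2+\zeta^2+\|\nabla F\|^2)\);
\item the tracking bias, bounded by \eqref{eq:tracking_assumption} as \(c_\lambda(\Omega_t+\delta_t^2)\).
\end{itemize}
Use Young's inequality for the \(\bar b^t\) cross term. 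For \(\eta\le c/(LK)\) the result is
\[
\EX F(\bar\theta^{t+1})\le \EX F(\bar\theta^t)-\tfrac{\eta K}{4}\EX\|\nabla F(\bar\theta^t)\|^2+O(\eta K)\Omega_t+O(\eta^2K L\sigma_g^2)+O(\eta K)\delta_t^2 .
\]

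\textbf{Step 2: consensus recursion.} Apply \(P\) and use \(PW_t^{\mathrm{eff}}=PW_t^{\mathrm{eff}}P\), which holds because \(W_t^{\mathrm{eff}}\bm 1=\bm 1\). Take the conditional expectation and use \eqref{eq:effective_mixing}, then split with \((1+\alpha)\) and \((1+1/\alpha)\) where \(\alpha=(1-\rho)/(2\rho)\). This yields
\[
\Omega_{t+1}\le \tfrac{1+\rho}{2}\,\Omega_t+\tfrac{C\eta^2K^2}{1-\rho}\bigl(\zeta^2+\sigma_g^2+\EX\|\nabla F(\bar\theta^t)\|^2+c_\lambda\Omega_t\bigr)+C\eta^2K^2\delta_t^2 .
\]
Here the gradient spread is bounded by heterogeneity, the \(L\)-Lipschitz term \(L^2\Omega_t\), and the tracking term. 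For \(\eta K\lesssim(1-\rho)/(L+\sqrt{c_\lambda})\), the \(\Omega_t\) coefficient stays below \((3+\rho)/4\). Unrolling the geometric recursion gives
\[
\frac1T\sum_t\Omega_t\lesssim \frac{\eta^2K^2}{(1-\rho)^2}\Bigl(\zeta^2+\sigma_g^2+\frac1T\sum_t\EX\|\nabla F\|^2\Bigr)+\frac{\eta^2K^2}{1-\rho}\,\frac1T\sum_t\delta_t^2 .
\]

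\textbf{Step 3: combine.} Sum the Step 1 descent over \(t\), telescope \(F(\bar\theta^0)-F_{\inf}\), divide by \(\eta KT/4\), and substitute the Step 2 average. For small enough \(\eta\), the \(\EX\|\nabla F\|^2\) contribution from \(\Omega\) is absorbed into the left side. The remaining terms match the statement:
\begin{itemize}
\item the initial-gap term \(O((F_0-F_{\inf})/(\eta KT))\);
\item the variance term \(O(\eta\sigma_g^2)\);
\item the heterogeneity term \(O(\eta K\zeta^2/(1-\rho)^2)\), which comes from \(\Omega\) after using \(\eta K\le1\) to convert \(\eta^2K^2\) into \(\eta K\);
\item the perturbation term \(O(\frac1T\sum\delta_t^2)\).
\end{itemize}
The heterogeneity bound holds as stated with this coarse \(\eta K\le 1\) reduction and is tightened with a finer step-size choice. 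The consensus \(\sigma_g^2\) contribution is of higher order in \(\eta\) and merges into \(O(\eta\sigma_g^2)\). Choosing \(\eta=\Theta(T^{-1/2})\) gives the stated neighborhood.

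\textbf{Main obstacle.} The hardest part is Step 2's self-referential coupling. The tracking assumption injects \(c_\lambda\Omega_t\) into both recursions, and \(W_t^{\mathrm{eff}}\) is only row-stochastic. The first fix to try is to fold every column-sum defect and every invalid-origin contribution into \(b_i^t\), so that \(P W^{\mathrm{eff}}_t\) acts contractively only on the clean disagreement. Then pick \(\eta\) small enough in terms of \(c_\lambda\), \(L\), \(K\), and \(1-\rho\) to close the loop, accepting that the constants hidden in the \(O(\cdot)\) terms depend on \(c_\lambda\).
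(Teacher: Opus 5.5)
Your proposal is correct and follows essentially the same route as the paper's proof: an \(L\)-smoothness descent inequality for the learning-clean average \(\bar\theta^t\), coupled with a contracted disagreement recursion \(\Omega_{t+1}\le\frac{1+\rho}{2}\Omega_t+\cdots\) that is unrolled and substituted back after summing and telescoping. Your bookkeeping is in places tighter than the paper's sketch. You take conditional expectations before applying Young's inequality, so the zero-mean noise enters only through the quadratic term and yields \(O(\eta\sigma_g^2)\); pushing the full \(\EX\|e^t\|_2^2\), which contains \(\sigma_g^2/(HK)\), through Young's inequality as the paper writes it would instead leave an \(\eta\)-independent term. Your explicit \(\alpha=(1-\rho)/(2\rho)\) split, with the \(c_\lambda\Omega_t\) and \(\|\nabla F_{\mathcal{H}_{\mathrm{L}}}\|_2^2\) feedback closed by the step-size condition, also shows where the \((1-\rho)^{-2}\) factor comes from rather than asserting it.
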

}

\section{Simulation Results}
\label{sec:simulations}
\begingroup
\color{blue}

This section evaluates gspDAG-FL in terms of learning quality, robustness, detection effectiveness, ripple dynamics, and ledger scalability. The experiments test whether gspDAG-FL preserves model quality under invalid participation, whether the validation stages remove different attack types, and whether the DAG control plane scales better than block-centric coordination.

\subsection{Simulation Setup}
\label{sec:setup}

\subsubsection{Experimental environment and baselines}
For gspDAG-FL, the ledger layer is implemented by extending the \texttt{dagsim} Hashgraph simulator~\cite{dagsim_ref} in \texttt{Kotlin} with JDK~17+ and \texttt{Gradle}. We augment events with origin tuples, payload hashes, event certificates, and accepted proofs. Full-node Topology-DAG processing uses \texttt{JGraphT}~\cite{jgrapht_ref}, and signature verification uses Ed25519 primitives from \texttt{BouncyCastle}~\cite{bouncycastle_ref}. Local learning and validation use \texttt{Python}~3.10, \texttt{PyTorch}~\cite{pytorch_ref}, and \texttt{NumPy}~\cite{numpy_ref}. Data and control channels are bidirectional \texttt{gRPC} streams using \texttt{gRPC-Kotlin}, \texttt{grpcio}, and \texttt{Protocol Buffers}~\cite{grpckotlin_ref,grpcio_ref,protobuf_ref}. Experiments are orchestrated with \texttt{Docker Compose}, logged to CSV, and visualized using \texttt{matplotlib}~\cite{matplotlib_ref}.\par

Large parameter sweeps and ripple-process stress tests use a trace-driven simulator with the same event transition, proof verification, duplicate, equivocation, and certification rules. Random seeds are fixed in each script. AD-PSGD uses the public \texttt{stochastic\_gradient\_push} implementation~\cite{adpsgd_repo}. BLADE-FL uses its public implementation~\cite{blade_repo} over a local Ganache Ethereum testnet~\cite{ganache_ref}. ChainFL uses the public ChainsFL implementation~\cite{gggg1}. Table~\ref{tab:baseline_setup} summarizes the adaptation of baselines.

\begin{table}[!t]
\caption{Baseline adaptation.}
\label{tab:baseline_setup}
\centering
\scriptsize
\setlength{\tabcolsep}{2.5pt}
\renewcommand{\arraystretch}{1.06}
\begin{adjustbox}{max width=\columnwidth}
\begin{tabular}{l|l|l|l}
\toprule
\textbf{Method} & \textbf{Topology} & \textbf{Finality} & \textbf{Validation} \\
\midrule
AD-PSGD & same gossip graph & none & none/local averaging \\
BLADE-FL & ledger broadcast & PoW block & public held-out set \\
ChainFL & shard/main ledger & shard+DAG commit & public held-out set \\
gspDAG-FL & local gossip & origin certificate & local/private audit \\
\bottomrule
\end{tabular}
\end{adjustbox}
\end{table}

\subsubsection{Data, tasks, and fairness}
Task~1 is MNIST image classification~\cite{mnist_ref}, using a lightweight CNN based on MobileNetV2~\cite{gggg12}; performance is clean test accuracy. Semantic attacks use a \(3\times3\) white square trigger and target class \(0\). Task~2 is Penn Treebank language modeling~\cite{penn_treebank_ref}, using a GRU next-word predictor; performance is clean test perplexity. Semantic attacks insert a fixed rare two-word trigger and force a target token.\par

All methods use the same effective training budget. The training pool is common to all methods. Public validation data for BLADE-FL and ChainFL and private audit data for gspDAG-FL are drawn from held-out samples and are not used for training. AD-PSGD does not use validation data. Final metrics use the same disjoint test split.

\subsubsection{Fault model, topology, and metrics}
Unless otherwise stated, the default setting is \(N=15\), target adversarial ratio \(\mu=0.15\), target lazy ratio \(\gamma=0.10\), full-node ratio \(0.40\), \(Q=\lfloor2N/3\rfloor+1\), and \(R_t^{\max}=Q+2\). Integer assignment uses
\[
    B=\min\{\lfloor\mu N\rfloor,\lfloor(N-1)/3\rfloor\},
    \qquad
    Z=\lfloor\gamma N\rfloor,
\]
with disjoint Byzantine and lazy sets. Lazy nodes are control-correct unless explicitly marked otherwise. In high-fault sweeps, the labels \(\mu,\gamma\) denote target ratios after this integer assignment. For \(N=5\), the \(0.40\) full-node ratio gives \(N_F=2\), so the full-node Byzantine bound permits no Byzantine full node; the simulation enforces this by sampling full nodes from the control-correct set.\par

For gspDAG-FL and AD-PSGD, the communication graph is a connected Watts--Strogatz small-world graph generated with NetworkX~\cite{networkx_ref}. The degree is \(k=8\) for \(N\le20\), \(k=10\) for \(21\le N\le30\), and \(k=12\) for \(N\ge50\). The rewiring probability is \(p=0.25\) for \(N<10\), \(p=0.20\) for \(10\le N\le30\), and \(p=0.18\) for larger networks. BLADE-FL and ChainFL use their native complete or shard-level communication structures.\par

Local learning uses \(B=50\), \(K=5\), \(\eta=0.01\), and \(\lambda=100\). Node bandwidths are sampled between \(10\) and \(50\) Mbps, and latencies between \(50\) and \(200\) ms. Detection trials use \(20\) independent runs. Convergence is declared when
\begin{equation}\label{eq:sim_stopping}
   \frac{|\mathcal{L}^t-\mathcal{L}^{t-1}|}{\mathcal{L}^{t-1}}<10^{-3}
\end{equation}
for five consecutive epochs, where \(\mathcal{L}^t\) is the trimmed mean of node-local losses after removing the highest and lowest \(10\%\). Latency is time for update exchange plus finality. Throughput is the number of correctly validated learning-clean updates included per consensus round.

\begin{table}[!t]
\caption{Simulation configuration.}
\label{tab:simulation_setup}
\centering
\scriptsize
\setlength{\tabcolsep}{3.0pt}
\renewcommand{\arraystretch}{1.04}
\begin{adjustbox}{max width=\columnwidth}
\begin{tabular}{l|l}
\toprule
\textbf{Parameter} & \textbf{Default / sweep value} \\
\midrule
Tasks & MNIST; Penn Treebank \\
Baselines & AD-PSGD, BLADE-FL, ChainFL \\
Default nodes & \(N=15\) \\
Network-size sweep & \(5,10,15,20,25,30,50,75,100\) \\
Target Byzantine ratio & \(\mu=0.15\) default \\
Target lazy ratio & \(\gamma=0.10\) default \\
Full-node ratio & \(0.40\) \\
Readiness threshold & \(Q=\lfloor2N/3\rfloor+1\) \\
Ripple cap & \(R_t^{\max}=Q+2\) \\
Local SGD & \(B=50,\ K=5,\ \eta=0.01,\ \lambda=100\) \\
Network heterogeneity & \(10\)--\(50\) Mbps, \(50\)--\(200\) ms \\
Invalid-origin types & magnitude, directional, semantic, lazy replay \\
\bottomrule
\end{tabular}
\end{adjustbox}
\end{table}

\subsection{Results and Analysis}
\label{sec:results}

\subsubsection{Learning behavior and robustness}
Fig.~\ref{fig:loss_curves} shows representative training-loss trajectories of gspDAG-FL. Higher Byzantine participation raises the final loss floor because Byzantine updates perturb direction or semantics. Higher lazy participation mainly slows convergence because stale replays reduce origin freshness but do not necessarily push the model in a malicious direction.

\begin{figure*}[!t]
    \centering
    \begin{subfigure}[b]{0.485\textwidth}
        \centering
        \includegraphics[width=\linewidth]{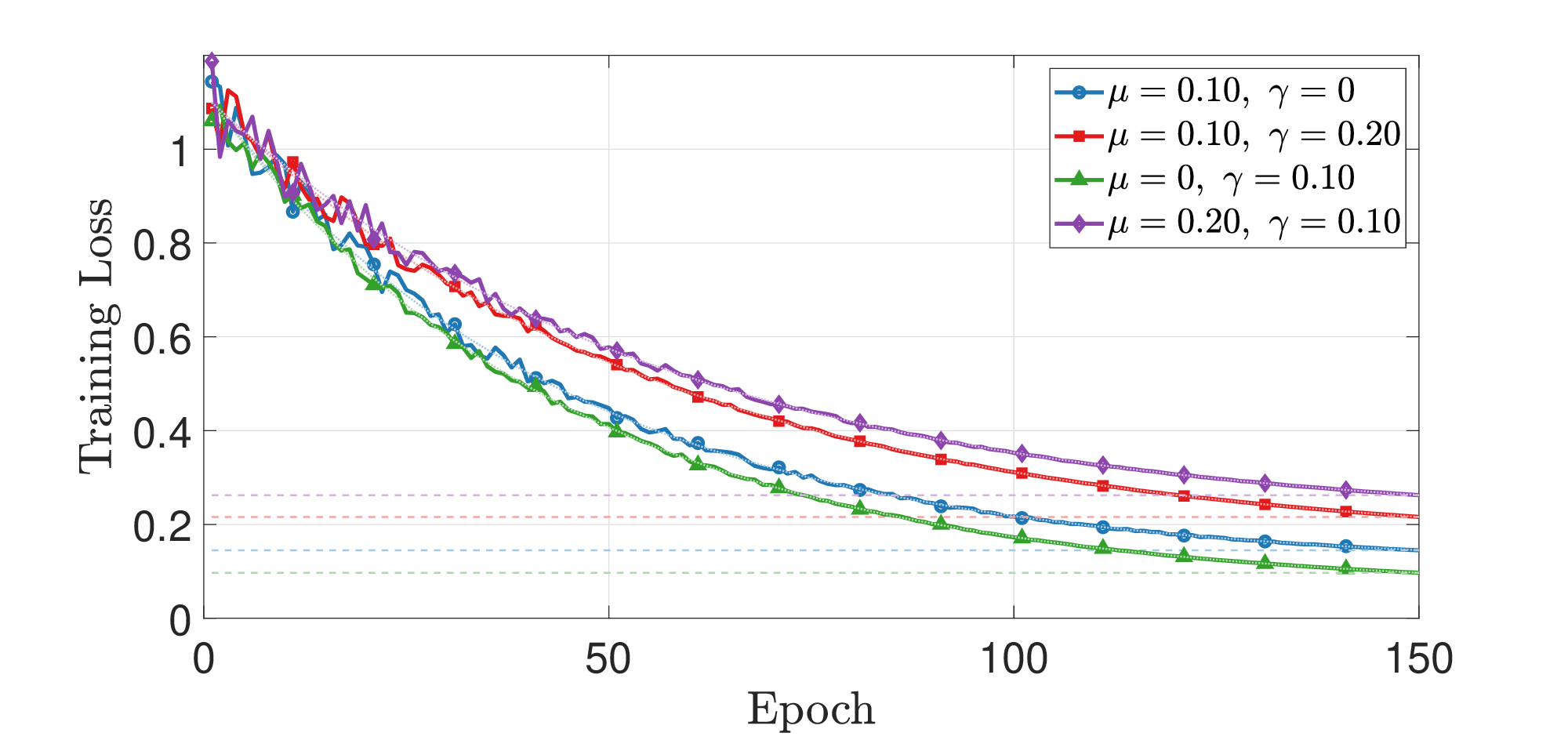}
        \caption{Task~1.}
        \label{fig:mnist_loss}
    \end{subfigure}
    \hfill
    \begin{subfigure}[b]{0.485\textwidth}
        \centering
        \includegraphics[width=\linewidth]{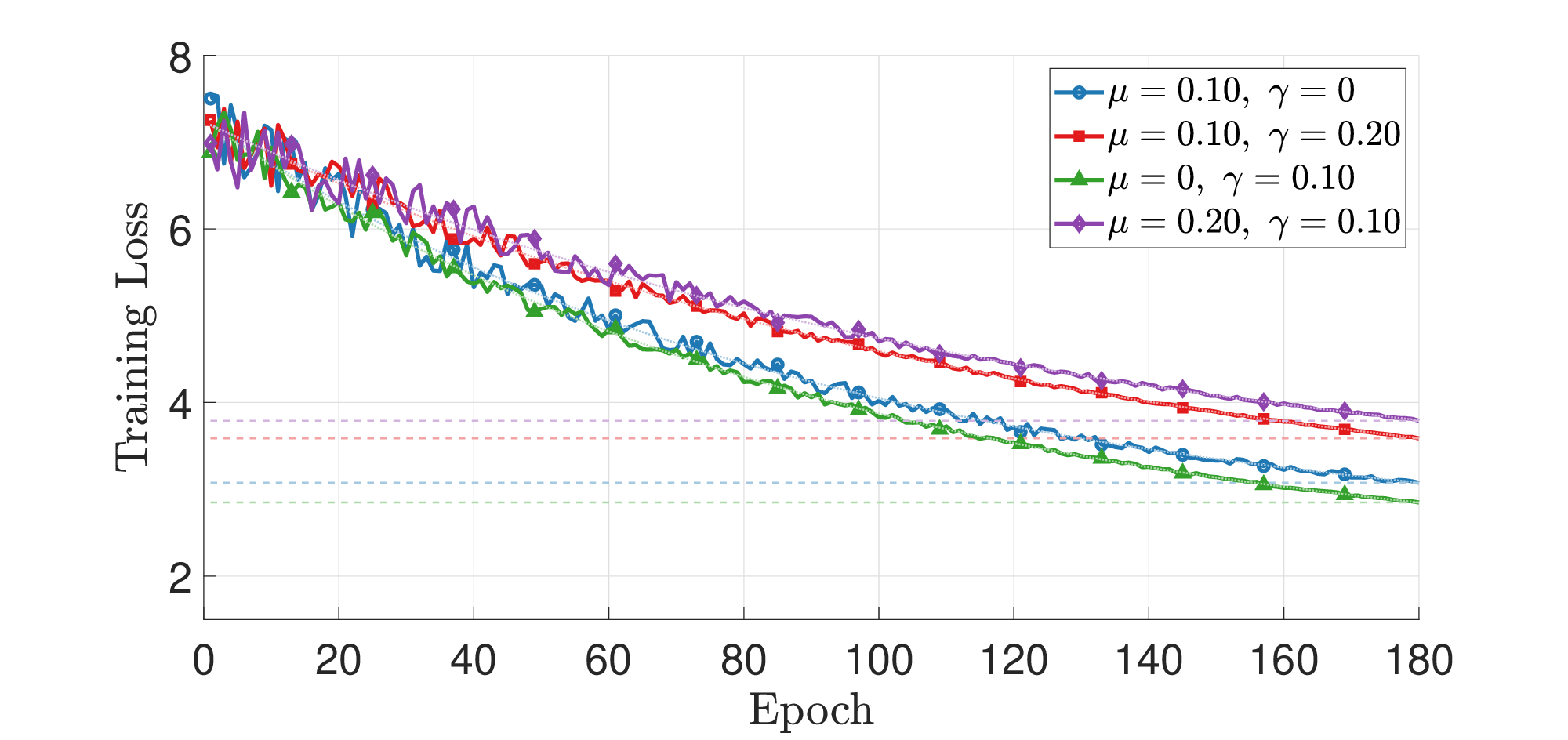}
        \caption{Task~2.}
        \label{fig:ptb_loss}
    \end{subfigure}
    \caption{Training-loss dynamics of gspDAG-FL under different Byzantine and lazy-node compositions.}
    \label{fig:loss_curves}
\end{figure*}

Table~\ref{tab:compact_learning_vs_N} reports learning performance as \(N\) increases. gspDAG-FL stays close to ChainFL and BLADE-FL because all three schemes include update-admission mechanisms. AD-PSGD is weaker because it has no finality layer; once a poisoned or stale update is locally averaged, later gossip can propagate its effect. At larger \(N\), gspDAG-FL benefits from a larger pool of learning-clean origins while the Topology DAG prevents most invalid origins from becoming control-plane certified.

\begin{table*}[!t]
\caption{Learning performance versus network size under fixed target \(\mu=0.15\) and \(\gamma=0.10\). Best values are bolded.}
\label{tab:compact_learning_vs_N}
\centering
\scriptsize
\setlength{\tabcolsep}{3.0pt}
\renewcommand{\arraystretch}{1.05}
\begin{adjustbox}{max width=\textwidth}
\begin{tabular}{ll|ccccccccc}
\toprule
\textbf{Task} & \textbf{Method}
& \multicolumn{9}{c}{\textbf{Number of nodes} \(N\)} \\
\cmidrule(lr){3-11}
& & 5 & 10 & 15 & 20 & 25 & 30 & 50 & 75 & 100 \\
\midrule
\multirow{4}{*}{Task 1 Acc. \(\uparrow\)}
& gspDAG-FL & 0.935 & \best{0.958} & 0.972 & \best{0.976} & 0.979 & \best{0.981} & \best{0.984} & \best{0.986} & \best{0.987} \\
& AD-PSGD   & 0.900 & 0.915 & 0.930 & 0.940 & 0.945 & 0.950 & 0.954 & 0.957 & 0.959 \\
& BLADE-FL  & 0.932 & 0.954 & 0.969 & 0.973 & 0.977 & 0.978 & 0.980 & 0.982 & 0.983 \\
& ChainFL   & \best{0.936} & 0.957 & \best{0.973} & 0.975 & \best{0.980} & 0.980 & 0.983 & 0.984 & 0.985 \\
\midrule
\multirow{4}{*}{Task 2 PPL. \(\downarrow\)}
& gspDAG-FL & \best{152.0} & 140.0 & 134.0 & \best{131.0} & \best{128.0} & \best{126.0} & \best{121.4} & \best{117.6} & \best{115.3} \\
& AD-PSGD   & 180.0 & 170.0 & 164.0 & 160.0 & 158.0 & 156.0 & 154.2 & 152.8 & 151.6 \\
& BLADE-FL  & 156.0 & 143.0 & 137.0 & 135.0 & 131.0 & 129.0 & 124.7 & 122.3 & 120.8 \\
& ChainFL   & 153.0 & \best{139.0} & \best{133.0} & 132.0 & 129.0 & 127.0 & 122.6 & 119.4 & 118.1 \\
\bottomrule
\end{tabular}
\end{adjustbox}
\end{table*}

Table~\ref{tab:compact_robustness} separates adversarial and lazy effects. Increasing \(\mu\) causes sharper degradation than increasing \(\gamma\), especially for Task~2. BLADE-FL and ChainFL remain robust, but their validation is tied to public-set admission and heavier ledger coordination. gspDAG-FL retains comparable learning quality while preserving local payload paths.

\begin{table*}[!t]
\caption{Robustness under adversarial and lazy participation at \(N=15\). Best values are bolded.}
\label{tab:compact_robustness}
\centering
\scriptsize
\setlength{\tabcolsep}{2.65pt}
\renewcommand{\arraystretch}{1.04}
\begin{adjustbox}{max width=\textwidth}
\begin{tabular}{ll|cccccc|ccccccc}
\toprule
\multirow{2}{*}{\textbf{Task}} & \multirow{2}{*}{\textbf{Method}}
& \multicolumn{6}{c|}{\textbf{Target adversarial ratio} \(\mu\), fixed \(\gamma=0.10\)}
& \multicolumn{7}{c}{\textbf{Target lazy ratio} \(\gamma\), fixed \(\mu=0.15\)} \\
\cmidrule(lr){3-8}\cmidrule(lr){9-15}
& & 0.05 & 0.10 & 0.15 & 0.20 & 0.25 & 0.30
& 0 & 0.05 & 0.10 & 0.15 & 0.20 & 0.25 & 0.30 \\
\midrule
\multirow{4}{*}{Task 1 Acc. \(\uparrow\)}
& gspDAG-FL & \best{0.982} & 0.977 & \best{0.972} & 0.966 & 0.958 & \best{0.946}
& \best{0.975} & \best{0.973} & \best{0.972} & \best{0.970} & \best{0.968} & \best{0.966} & \best{0.963} \\
& AD-PSGD & 0.945 & 0.938 & 0.930 & 0.918 & 0.900 & 0.870
& 0.938 & 0.934 & 0.930 & 0.925 & 0.920 & 0.915 & 0.910 \\
& BLADE-FL & 0.980 & 0.976 & 0.971 & 0.965 & 0.957 & 0.944
& 0.973 & 0.971 & 0.969 & 0.967 & 0.964 & 0.962 & 0.959 \\
& ChainFL & 0.981 & \best{0.978} & 0.971 & \best{0.967} & \best{0.959} & 0.945
& 0.974 & 0.972 & 0.971 & 0.969 & 0.967 & 0.965 & 0.962 \\
\midrule
\multirow{4}{*}{Task 2 PPL. \(\downarrow\)}
& gspDAG-FL & \best{128.0} & 131.0 & \best{134.0} & 140.0 & \best{148.0} & 160.0
& \best{132.0} & 133.0 & \best{134.0} & \best{135.0} & \best{136.0} & \best{138.0} & \best{139.0} \\
& AD-PSGD & 150.0 & 157.0 & 164.0 & 176.0 & 192.0 & 215.0
& 160.0 & 162.0 & 164.0 & 167.0 & 170.0 & 173.0 & 176.0 \\
& BLADE-FL & 130.0 & 132.0 & 136.0 & 141.0 & 150.0 & 162.0
& 134.0 & 135.0 & 136.0 & 138.0 & 140.0 & 142.0 & 144.0 \\
& ChainFL & 129.0 & \best{130.0} & 135.0 & \best{139.0} & 149.0 & \best{159.0}
& 133.0 & \best{133.0} & \best{134.0} & 136.0 & 137.0 & 139.0 & 140.0 \\
\bottomrule
\end{tabular}
\end{adjustbox}
\end{table*}

\subsubsection{Validation pipeline and ripple dynamics}
Table~\ref{tab:defense_pipeline_dense} reports end-to-end detection rates and false alarms. The false-alarm rate stays below \(0.4\%\), so robustness is not obtained by discarding many learning-clean updates. Table~\ref{tab:cumulative_detection_dense} shows how stages contribute over time: magnitude filtering dominates early, directional detections rise after the direction reference stabilizes, and semantic detections become more important after finality.

\begin{table}[!t]
\caption{End-to-end defense-pipeline summary under \(N=15\), target \(\mu=0.15\), and target \(\gamma=0.10\). Counts are mean \(\pm\) standard deviation over \(20\) runs.}
\label{tab:defense_pipeline_dense}
\centering
\scriptsize
\setlength{\tabcolsep}{2.6pt}
\renewcommand{\arraystretch}{1.05}
\begin{adjustbox}{max width=\columnwidth}
\begin{tabular}{l|ccc}
\toprule
\textbf{Task} & \textbf{Detection rate} & \textbf{False alarm} & \textbf{Detected / flawed} \\
\midrule
Task~1 & \(96.1\%\) & \(0.35\%\) & \(432.6\pm13.3\;/\;450\) \\
Task~2 & \(95.7\%\) & \(0.34\%\) & \(502.6\pm14.6\;/\;525\) \\
\bottomrule
\end{tabular}
\end{adjustbox}
\end{table}

\begin{table*}[!t]
\caption{Cumulative invalid-origin detection by validation stage.}
\label{tab:cumulative_detection_dense}
\centering
\scriptsize
\setlength{\tabcolsep}{3.0pt}
\renewcommand{\arraystretch}{1.05}
\begin{adjustbox}{max width=\textwidth}
\begin{tabular}{ll|ccccc|ccccc}
\toprule
\multirow{2}{*}{\textbf{Quantity}} & \multirow{2}{*}{\textbf{Stage}}
& \multicolumn{5}{c|}{\textbf{Task 1 epochs}}
& \multicolumn{5}{c}{\textbf{Task 2 epochs}} \\
\cmidrule(lr){3-7}\cmidrule(lr){8-12}
& & 20 & 40 & 60 & 80 & 100 & 24 & 48 & 72 & 96 & 120 \\
\midrule
\multirow{5}{*}{Cumulative count}
& Magnitude & 38 & 72 & 101 & 128 & 150 & 50 & 94 & 134 & 164 & 180 \\
& Direction & 16 & 34 & 54 & 72 & 88 & 18 & 40 & 62 & 86 & 104 \\
& Semantic  & 3  & 8  & 17 & 30 & 50 & 1  & 4  & 12 & 24 & 61 \\
& Undetected & 3 & 6 & 8 & 10 & 12 & 3 & 6 & 8 & 14 & 15 \\
& Detected total & 57 & 114 & 172 & 230 & 288 & 69 & 138 & 208 & 274 & 345 \\
\midrule
\multicolumn{2}{l|}{Invalid origins}
& 60 & 120 & 180 & 240 & 300 & 72 & 144 & 216 & 288 & 360 \\
\bottomrule
\end{tabular}
\end{adjustbox}
\end{table*}

Fig.~\ref{fig:ripple_dynamics} shows the per-epoch ripple count \(R_t\). Clean regimes stay close to the useful receive-opportunity lower bound \(Q-1\). Adversary-heavy regimes need more ripples because invalid origins are rejected before contributing to readiness. Lazy-heavy regimes also increase \(R_t\), but less sharply, because control-correct forwarders can still carry fresh origins.

\begin{figure*}[!t]
    \centering
    \begin{subfigure}[b]{0.485\textwidth}
        \centering
        \includegraphics[width=\linewidth]{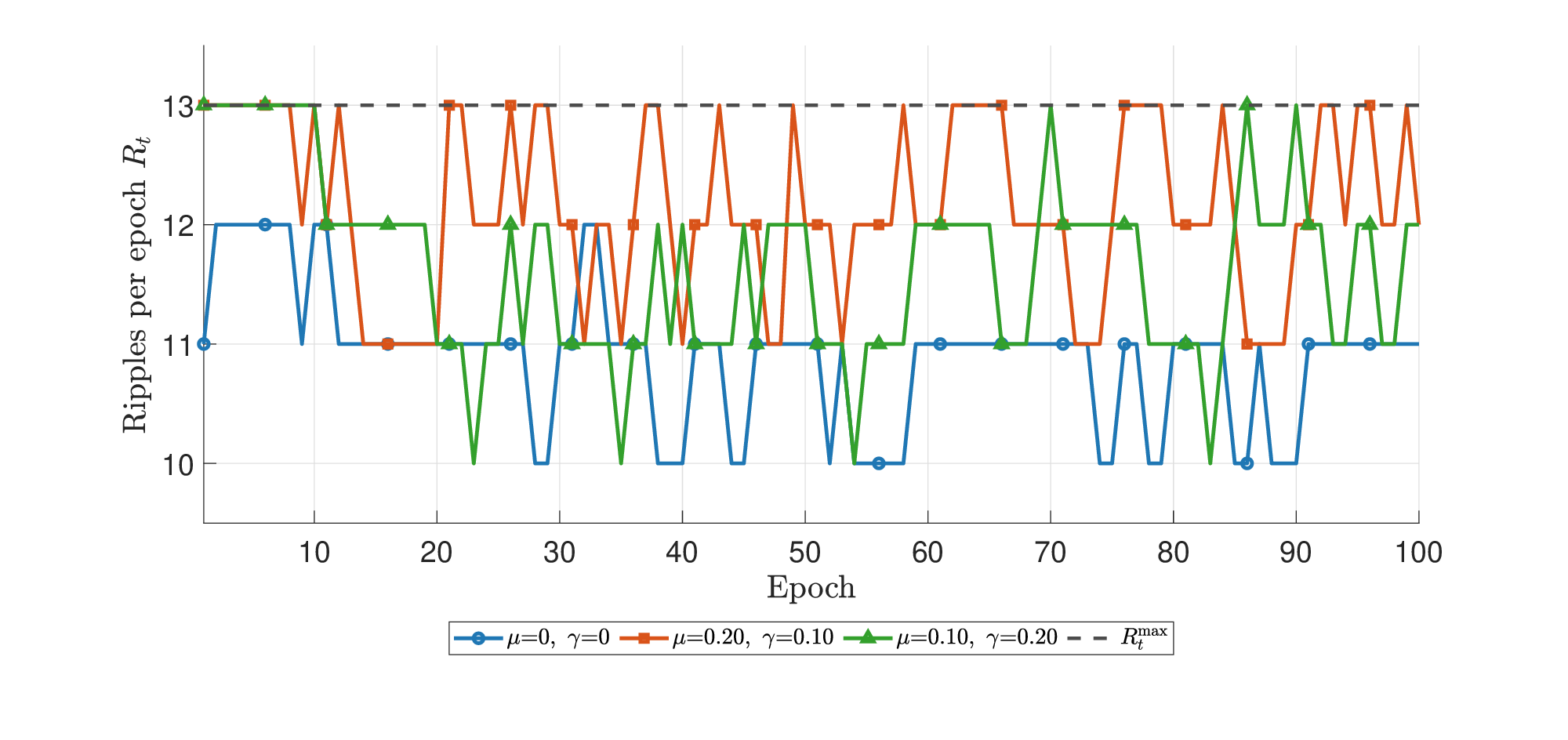}
        \caption{Task~1.}
        \label{fig:rt_task1}
    \end{subfigure}
    \hfill
    \begin{subfigure}[b]{0.485\textwidth}
        \centering
        \includegraphics[width=\linewidth]{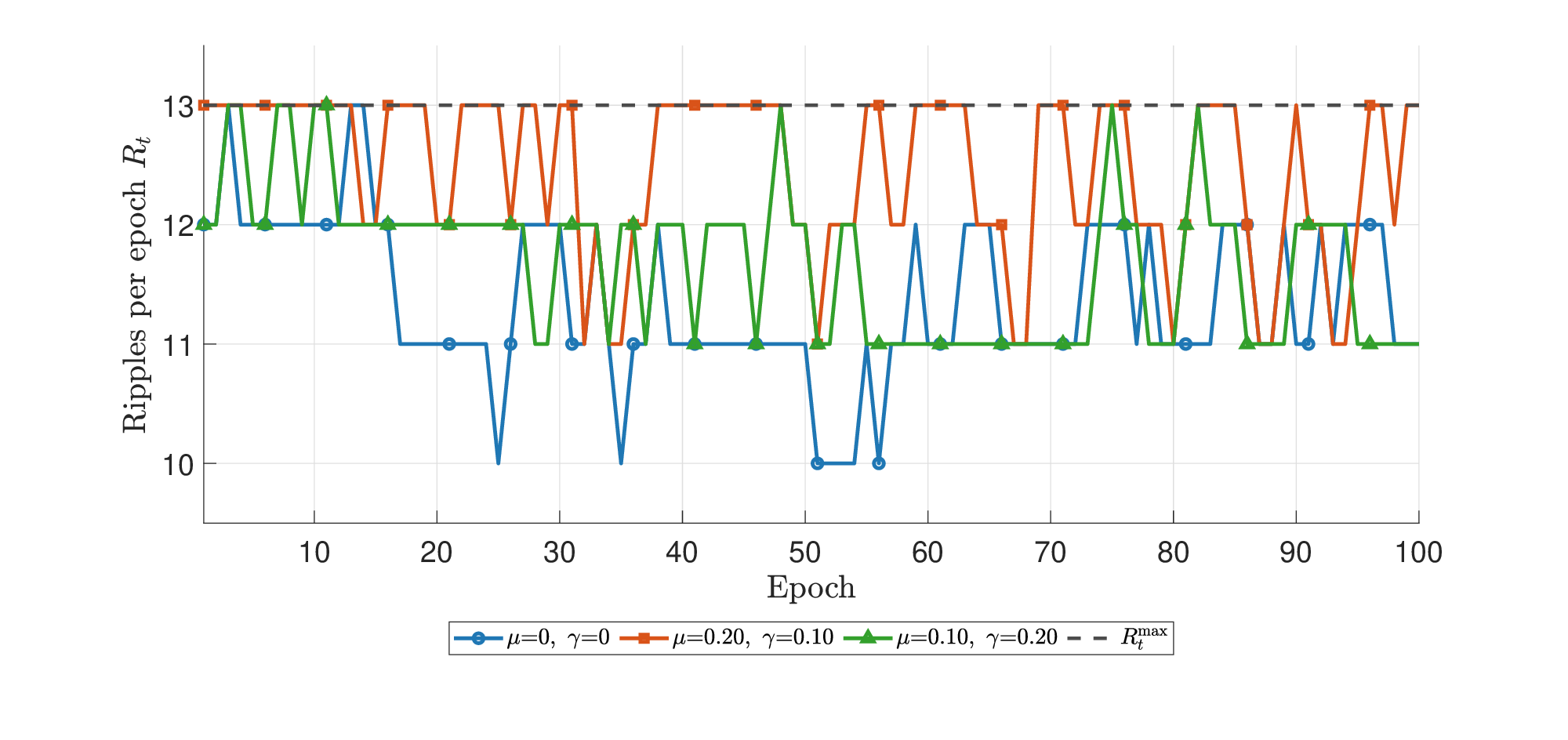}
        \caption{Task~2.}
        \label{fig:rt_task2}
    \end{subfigure}
    \caption{Ripple dynamics of gspDAG-FL for \(N=15\), \(Q=11\), and \(R_t^{\max}=13\).}
    \label{fig:ripple_dynamics}
\end{figure*}

\begin{table}[!t]
\caption{Ripple statistics for \(N=15\), \(Q=11\), and \(R_t^{\max}=13\).}
\label{tab:ripple_statistics_dense}
\centering
\scriptsize
\setlength{\tabcolsep}{2.8pt}
\renewcommand{\arraystretch}{1.05}
\begin{adjustbox}{max width=\columnwidth}
\begin{tabular}{l|ccc|ccc}
\toprule
\multirow{2}{*}{\textbf{Regime}}
& \multicolumn{3}{c|}{\textbf{Task 1}}
& \multicolumn{3}{c}{\textbf{Task 2}} \\
\cmidrule(lr){2-4}\cmidrule(lr){5-7}
& med. & mean & cap
& med. & mean & cap \\
\midrule
\(\mu=0,\gamma=0\)
& \best{11} & \best{10.84} & \best{3.7\%}
& \best{12} & \best{11.13} & \best{8.6\%} \\
\(\mu=0.20,\gamma=0.10\)
& 12 & 11.91 & 12.4\%
& 13 & 12.28 & 24.7\% \\
\(\mu=0.10,\gamma=0.20\)
& 11 & 11.27 & 7.3\%
& 12 & 11.62 & 13.9\% \\
\bottomrule
\end{tabular}
\end{adjustbox}
\end{table}

\subsubsection{System scaling}
Table~\ref{tab:compact_system_scaling} reports normalized latency, throughput, and convergence rounds. Each value is normalized by the same method's value at \(N=5\), so the table reports scaling trend rather than absolute hardware-specific latency. gspDAG-FL keeps model tensors on local gossip paths and sends only metadata through the full-node control plane. BLADE-FL globally propagates transactions and waits for block confirmation. ChainFL reduces the bottleneck through sharding, but shard consensus and main-chain commitment still add synchronization overhead.

\begin{table*}[!t]
\caption{Normalized system scaling under fixed target \(\mu=0.15\) and \(\gamma=0.10\). Values are normalized to each method's value at \(N=5\). Best values are bolded.}
\label{tab:compact_system_scaling}
\centering
\scriptsize
\setlength{\tabcolsep}{2.7pt}
\renewcommand{\arraystretch}{1.03}
\begin{adjustbox}{max width=\textwidth}
\begin{tabular}{lll|ccccccccc}
\toprule
\textbf{Task} & \textbf{Metric} & \textbf{Method}
& \multicolumn{9}{c}{\textbf{Number of nodes} \(N\)} \\
\cmidrule(lr){4-12}
& & & 5 & 10 & 15 & 20 & 25 & 30 & 50 & 75 & 100 \\
\midrule
\multirow{9}{*}{Task 1}
& \multirow{3}{*}{Lat. \(\downarrow\)}
& gspDAG-FL & \best{1.000} & \best{1.194} & \best{1.297} & \best{1.456} & \best{1.538} & \best{1.624} & \best{2.071} & \best{2.637} & \best{3.118} \\
& & ChainFL   & 1.000 & 1.319 & 1.642 & 1.836 & 2.027 & 2.476 & 3.691 & 5.183 & 6.742 \\
& & BLADE-FL  & 1.000 & 1.397 & 1.743 & 2.108 & 2.493 & 2.897 & 4.582 & 6.914 & 9.227 \\
\cmidrule(lr){2-12}
& \multirow{3}{*}{Thr. \(\uparrow\)}
& gspDAG-FL & \best{1.000} & \best{1.704} & \best{2.427} & \best{3.012} & \best{3.449} & \best{4.013} & \best{5.674} & \best{7.386} & \best{8.832} \\
& & ChainFL   & 1.000 & 1.486 & 1.932 & 2.286 & 2.617 & 2.803 & 3.548 & 4.213 & 4.762 \\
& & BLADE-FL  & 1.000 & 0.982 & 0.921 & 0.832 & 0.731 & 0.653 & 0.514 & 0.407 & 0.342 \\
\cmidrule(lr){2-12}
& \multirow{3}{*}{Rounds \(\downarrow\)}
& gspDAG-FL & \best{1.000} & \best{0.887} & \best{0.826} & \best{0.798} & \best{0.731} & \best{0.648} & \best{0.557} & \best{0.509} & \best{0.476} \\
& & ChainFL   & 1.000 & 0.981 & 0.969 & 0.958 & 0.979 & 0.991 & 1.048 & 1.116 & 1.184 \\
& & BLADE-FL  & 1.000 & 1.071 & 1.163 & 1.257 & 1.382 & 1.553 & 2.036 & 2.571 & 2.924 \\
\midrule
\multirow{9}{*}{Task 2}
& \multirow{3}{*}{Lat. \(\downarrow\)}
& gspDAG-FL & \best{1.000} & \best{1.263} & \best{1.428} & \best{1.639} & \best{1.774} & \best{1.905} & \best{2.413} & \best{3.021} & \best{3.587} \\
& & ChainFL   & 1.000 & 1.492 & 2.083 & 2.314 & 2.579 & 2.981 & 4.381 & 6.128 & 7.943 \\
& & BLADE-FL  & 1.000 & 1.512 & 2.119 & 2.553 & 3.107 & 3.654 & 5.786 & 8.971 & 12.236 \\
\cmidrule(lr){2-12}
& \multirow{3}{*}{Thr. \(\uparrow\)}
& gspDAG-FL & \best{1.000} & \best{1.518} & \best{2.083} & \best{2.571} & \best{2.962} & \best{3.301} & \best{4.764} & \best{6.137} & \best{7.124} \\
& & ChainFL   & 1.000 & 1.331 & 1.527 & 2.004 & 2.118 & 2.347 & 3.093 & 3.671 & 4.138 \\
& & BLADE-FL  & 1.000 & 0.987 & 0.869 & 0.801 & 0.648 & 0.569 & 0.447 & 0.353 & 0.284 \\
\cmidrule(lr){2-12}
& \multirow{3}{*}{Rounds \(\downarrow\)}
& gspDAG-FL & \best{1.000} & \best{0.941} & \best{0.898} & \best{0.869} & \best{0.842} & \best{0.819} & \best{0.771} & \best{0.731} & \best{0.704} \\
& & ChainFL   & 1.000 & 1.032 & 1.018 & 1.041 & 1.073 & 1.103 & 1.182 & 1.268 & 1.354 \\
& & BLADE-FL  & 1.000 & 1.052 & 1.129 & 1.231 & 1.421 & 1.704 & 2.247 & 2.768 & 3.164 \\
\bottomrule
\end{tabular}
\end{adjustbox}
\end{table*}

The throughput trend is the clearest system-level effect. Increasing \(N\) increases the number of local model origins available per epoch. In gspDAG-FL, these origins do not become all-to-all tensor broadcasts; only their signed provenance enters the control plane. Hence, more participants increase useful update diversity without creating the same confirmation bottleneck as block-centric ledgers.\par

The results support four conclusions. First, gspDAG-FL achieves learning quality comparable to validation-based ledger FL while preserving gossip-level model exchange. Second, Byzantine faults are more damaging than lazy faults because they alter update direction and semantics, not only freshness. Third, the validation stages are complementary. Fourth, the Topology DAG gives global provenance finality over origin tuples without forcing all nodes to hold identical model tensors in every epoch.

\endgroup

\section{Conclusion}
\label{sec:conclusion}
\begingroup
\color{blue}

This paper introduced gspDAG-FL, a secure decentralized federated learning framework that derives finality from gossip history rather than from a separate block, shard, or committee layer. Model payloads remain on local peer-to-peer paths, while full nodes use event certificates and receiver-endorsed accepted gossip proofs to reconstruct a Topology DAG and run virtual voting over provenance-admissible origin tuples. This gives global finality over which origins may be considered for aggregation without requiring identical local model states at all nodes.\par

The framework combines payload validation, accepted-proof validation, and private semantic audit to limit stale, malformed, and behaviorally abnormal updates. We proved well-definedness, quorum-intersection safety, virtual-voting consistency, conditional liveness, and a convergence guarantee under time-varying certified aggregation. Simulations on image classification and language modeling show that gspDAG-FL preserves learning quality close to validation-based ledger FL while improving latency, throughput, and scalability under Byzantine and lazy participation in the tested range up to \(N=100\).\par

Future work will study dynamic churn, adaptive full-node selection, stronger privacy for proof and audit metadata, and deployment under real edge-network traces.

\endgroup

\appendices
\begingroup
\color{blue}

\section{Proofs of Consensus Properties}
\label{app:consensus_proofs}

For an event \(e\), let \(\mathrm{cr}(e)\) denote its creator and \(\mathrm{rip}(e)\) its ripple. With the parent-to-child edge convention, \(u\preceq e\) means \(u\to^\star e\). Thus \(u\) is an ancestor visible from \(e\) by following parent links backward. Since all stored edges increase ripple index, the DAG is acyclic. Validity of event certificates and accepted proofs includes all signatures, endorsements, hashes, identities, origin tuples, and duplicate/equivocation checks.

\subsection{Proof of Lemma~\ref{lem:well_defined_voting}}

At ripple \(0\), all genesis moments and reachable sets are fixed. Assume all quantities are uniquely defined up to ripple \(r-1\). For any event \(e_j(r)\), its self-parent and possible gossip parent lie in earlier ripple \(r-1\), so their moments are unique. For a non-empty event, the base moment \(M\) in \eqref{eq:base_moment} is therefore unique. The ancestor relation \(\preceq\) is determined by the verified DAG. For each creator \(q\), the set of reachable eligible events of moment \(M\) is finite; if it is nonempty, the event with smallest ripple index is unique because a control-correct node creates at most one valid event certificate per ripple and equivocations are excluded. Hence the earliest-per-creator rule uniquely defines \(\mathcal{R}\{e_j(r)\}\). The voting test and moment update are deterministic, so \(\mathcal{M}\{e_j(r)\}\) is unique. For an empty heartbeat event, the inherited moment from the self-parent is unique and the event is non-voting.\par

Virtual votes are defined by induction over moment. Moment-2 votes are determined directly by whether \(g(\omega)\in\mathcal{R}\{e\}\). For moment \(m>2\), the vote of a voting event depends only on reachable eligible events of moment \(m-1\), whose votes have already been uniquely defined. The intermediate vote is a deterministic threshold function of these lower-moment votes. Therefore \(\mathcal{V}\) and \(\mathcal{U}\) are uniquely defined for every voting event and non-equivocated origin tuple. This proves the lemma.

\subsection{Proof of Theorem~\ref{theo:quorum_safety}}

Let \(B_F<N_F/3\) be the number of Byzantine full nodes. Any accepted certificate has at least
\[
    \tau_F=\left\lfloor\frac{2N_F}{3}\right\rfloor+1
\]
signatures. Suppose two different termination certificates for the same epoch are accepted, with signer sets \(\mathcal{Q}_1\) and \(\mathcal{Q}_2\). Then
\[
    |\mathcal{Q}_1\cap\mathcal{Q}_2|
    \ge
    |\mathcal{Q}_1|+|\mathcal{Q}_2|-N_F
    \ge
    2\tau_F-N_F.
\]
For \(N_F=3k,3k+1,3k+2\), direct substitution gives
\[
    2\tau_F-N_F>\frac{N_F}{3}.
\]
Hence \(|\mathcal{Q}_1\cap\mathcal{Q}_2|>N_F/3>B_F\), so the intersection contains at least one control-correct full node. A control-correct full node signs at most one termination certificate per epoch. Therefore two different certificates for the same epoch cannot both gather valid quorums. This proves safety.

\subsection{Proof of Theorem~\ref{theo:vv_consistency}}

If two control-correct full nodes have the same valid event-certificate and accepted-proof set up to ripple \(r\), deterministic verification gives the same vertices, self edges, gossip-parent edges, and equivocation exclusions. Therefore, the two full nodes hold the same Topology DAG prefix. By the induction argument in Lemma~\ref{lem:well_defined_voting}, the same DAG prefix gives identical moment values and reachable eligible sets. Since the vote and intermediate-confirmation rules are deterministic functions of these sets, the two full nodes compute identical \(\mathcal{V}\), \(\mathcal{U}\), and local-confirmation vectors at ripple \(r\).

\subsection{Proof of Theorem~\ref{theo:conditional_liveness}}

Because fewer than \(N_F/3\) full nodes are Byzantine, at least \(\tau_F\) full nodes are control-correct. By assumption, at ripple \(r^\star\), at least \(\tau_N\) control-correct nodes have each observed at least \(Q\) non-equivocated payload-valid origin tuples that are confirmable from the valid DAG. Eventual delivery ensures that all event certificates and accepted proofs needed for these observations eventually reach every control-correct full node. Bounded processing delay ensures that they are verified and inserted into each control-correct full node's DAG prefix in finite time.\par

By Theorem~\ref{theo:vv_consistency}, control-correct full nodes with the same decisive proof prefix compute the same confirmations for the relevant origin tuples. Once at least \(\tau_F\) matching confirmation vectors are exchanged, the same certified set \(C(r)\) is derived at control-correct full nodes. Since at least \(\tau_N\) nodes satisfy
\[
    |C(r)\cap\mathcal{O}_j(r)|\ge Q,
\]
control-correct full nodes sign termination certificates for \((t,r,C(r))\). Eventual delivery of those certificates gives every control-correct node \(\tau_F\) matching certificates, so it terminates. The proof is conditional: if the dissemination event never occurs because too many valid payloads are lost, rejected, or withheld, no protocol can force this readiness condition.

\section{Proofs of Validation and Learning Statements}
\label{app:validation_learning_proofs}

\subsection{Proof of Proposition~\ref{prop:filter_false_rejection}}

Let \(X=f_m(z)\) for a learning-clean update. Let
\[
    \mathcal{E}_n=
    \{|\mu_i^m-\bar\mu_m|\le\varepsilon_\mu,\ 
      |\sigma_i^m-\bar\sigma_m|\le\varepsilon_\sigma\}.
\]
By assumption, \(\Pr(\mathcal{E}_n)\ge1-\delta_n\). On \(\mathcal{E}_n\), upper rejection implies
\[
    X-\bar\mu_m>
    3\bar\sigma_m-\varepsilon_\mu-3\varepsilon_\sigma.
\]
For the two-sided magnitude test, lower rejection similarly implies
\[
    \bar\mu_m-X>
    3\bar\sigma_m-\varepsilon_\mu-3\varepsilon_\sigma.
\]
Let
\[
    a_m=
    \bigl(3\bar\sigma_m-\varepsilon_\mu-3\varepsilon_\sigma\bigr)_+ .
\]
The sub-Gaussian tail inequality gives
\[
    \Pr\{X-\bar\mu_m>a_m\}
    \le
    \exp\!\left(-\frac{a_m^2}{2\bar\sigma_m^2}\right),
\]
and the same bound holds for the lower tail. Therefore, by the union bound,
\[
    \Pr\{\text{clean rejection for }f_1\}
    \le
    \delta_n
    +
    2\exp\!\left(-\frac{a_1^2}{2\bar\sigma_1^2}\right).
\]
For the directional statistic, only the lower tail is used:
\[
    \Pr\{\text{clean rejection for }f_2\}
    \le
    \delta_n
    +
    \exp\!\left(-\frac{a_2^2}{2\bar\sigma_2^2}\right).
\]
This gives the proposition.

\subsection{Proof of Proposition~\ref{prop:semantic_separation}}

Let \(T_i^t=\mathrm{med}_i^t+3\,\mathrm{MAD}_i^t\). The audit removes exactly those tuples with \(d_i(z_\omega)>T_i^t\). If \(\omega\in\mathcal{H}_i^t\), then
\[
    d_i(z_\omega)\le a_i^t\le T_i^t,
\]
so it is retained. If \(\omega\in\mathcal{B}_i^t\), then
\[
    d_i(z_\omega)\ge b_i^t>T_i^t,
\]
so it is removed. The honest-majority condition ensures that the median is not controlled by Byzantine scores; exact separation follows from the displayed threshold inequality.

\subsection{Proof of Theorem~\ref{theo:learning_convergence}}

Let \(H=|\mathcal{H}_{\mathrm{L}}|\), stack the learning-clean parameters as
\[
    \Theta^t=
    [\theta_1^t,\ldots,\theta_H^t]^\top
    \in\mathbb{R}^{H\times d},
\]
and define
\[
    J=H^{-1}\bm{1}\bm{1}^{\top},
    \qquad
    P=I-J,
    \qquad
    \Omega_t=H^{-1}\EX\|P\Theta^t\|_F^2.
\]
The effective aggregation matrix \(W_t^{\mathrm{eff}}\) is row-stochastic and satisfies \eqref{eq:effective_mixing}. Let \(g_i^{t,k}\) be the stochastic gradient including the primal--dual correction. The tracking condition is \eqref{eq:tracking_assumption}.\par

After \(K\) local steps and certified aggregation, the learning-clean average obeys
\[
    \bar\theta^{t+1}
    =
    \bar\theta^t
    -
    \eta\sum_{k=0}^{K-1}\bar g^{t,k}
    +
    \eta K\bar b^t,
\]
where
\[
    \bar g^{t,k}
    =
    H^{-1}\sum_{i\in\mathcal{H}_{\mathrm{L}}}g_i^{t,k},
    \qquad
    \bar b^t=
    H^{-1}\sum_{i\in\mathcal{H}_{\mathrm{L}}}b_i^t .
\]
By Jensen's inequality,
\[
    \EX\|\bar b^t\|_2^2
    \le \delta_t^2.
\]

The disagreement recursion follows from \eqref{eq:effective_mixing}. Let \(\widetilde{\Theta}^{t+1}\) be the post-local-SGD, pre-aggregation stack. Since \(W_t^{\mathrm{eff}}\) is row-stochastic,
\[
    P W_t^{\mathrm{eff}}\widetilde{\Theta}^{t+1}
\]
is the disagreement component after effective mixing. By \eqref{eq:effective_mixing}, smoothness, bounded variance, bounded heterogeneity, and the tracking condition,
\[
\begin{aligned}
    \Omega_{t+1}
    &\le
    \frac{1+\rho}{2}\Omega_t
    +
    C_1\eta^2K^2(\sigma_g^2+\zeta^2)
    +
    C_2\eta^2K^2\delta_t^2 .
\end{aligned}
\]
Iterating the recursion gives
\[
\begin{aligned}
    \frac{1}{T}\sum_{t=0}^{T-1}\Omega_t
    &\le
    O\!\left(
    \frac{\Omega_0}{(1-\rho)T}
    \right)
    +
    O\!\left(
    \frac{\eta^2K^2(\sigma_g^2+\zeta^2)}{1-\rho}
    \right)
\\
    &\quad+
    O\!\left(
    \frac{\eta^2K^2}{(1-\rho)T}
    \sum_{t=0}^{T-1}\delta_t^2
    \right).
\end{aligned}
\]

Define
\[
    G^t=K^{-1}\sum_{k=0}^{K-1}\bar g^{t,k},
    \qquad
    e^t=G^t-\nabla F_{\mathcal{H}_{\mathrm{L}}}(\bar\theta^t).
\]
Using \(L\)-smoothness, gradient variance, heterogeneity, disagreement, local drift, and tracking,
\[
    \EX\|e^t\|_2^2
    \le
    C_3
    \left(
    \frac{\sigma_g^2}{HK}
    +
    L^2\Omega_t
    +
    \eta^2K^2L^2(\sigma_g^2+\zeta^2)
    +
    \delta_t^2
    \right).
\]
By \(L\)-smoothness of \(F_{\mathcal{H}_{\mathrm{L}}}\),
\[
\begin{aligned}
F_{\mathcal{H}_{\mathrm{L}}}(\bar\theta^{t+1})
&\le
F_{\mathcal{H}_{\mathrm{L}}}(\bar\theta^t)
+
\left\langle
\nabla F_{\mathcal{H}_{\mathrm{L}}}(\bar\theta^t),
-\eta K G^t+\eta K\bar b^t
\right\rangle
\\
&\quad+
\frac{L}{2}
\left\|
-\eta K G^t+\eta K\bar b^t
\right\|_2^2 .
\end{aligned}
\]
Substituting \(G^t=\nabla F_{\mathcal{H}_{\mathrm{L}}}(\bar\theta^t)+e^t\), applying Young's inequality, and choosing \(\eta K L\) sufficiently small gives
\[
\begin{aligned}
\EX F_{\mathcal{H}_{\mathrm{L}}}(\bar\theta^{t+1})
&\le
\EX F_{\mathcal{H}_{\mathrm{L}}}(\bar\theta^t)
\\
&\quad
-\frac{\eta K}{4}
\EX\!\left[
\left\|
\nabla F_{\mathcal{H}_{\mathrm{L}}}(\bar\theta^t)
\right\|_2^2
\right]
\\
&\quad
+C_4\eta K\EX\|e^t\|_2^2
+C_5\eta K\delta_t^2 .
\end{aligned}
\]
Summing from \(0\) to \(T-1\), using the lower bound \(F_{\inf}\), and substituting the bounds on \(e^t\) and \(\Omega_t\), yields
\[
\begin{aligned}
\frac{1}{T}
\sum_{t=0}^{T-1}
\EX\!\left[
\left\|
\nabla F_{\mathcal{H}_{\mathrm{L}}}(\bar\theta^t)
\right\|_2^2
\right]
&\le
O\!\left(
\frac{
F_{\mathcal{H}_{\mathrm{L}}}(\bar\theta^0)-F_{\inf}
}{
\eta KT
}
\right)
\\
&\quad+
O(\eta\sigma_g^2)
+
O\!\left(
\frac{\eta K\zeta^2}{(1-\rho)^2}
\right)
\\
&\quad+
O\!\left(
\frac{1}{T}
\sum_{t=0}^{T-1}
\delta_t^2
\right).
\end{aligned}
\]
The \((1-\rho)^{-2}\) factor is the standard amplification of local drift and heterogeneity through decentralized mixing. Taking \(\eta=\Theta(T^{-1/2})\) gives the stated stationary-neighborhood conclusion.

\endgroup

\balance
{\scriptsize
\bibliographystyle{IEEEtran}

\begin{thebibliography}{99}
\setlength{\itemsep}{0pt}

\bibitem{McMahan2017}
H.~B. McMahan, E.~Moore, D.~Ramage, S.~Hampson, and B.~Ag{\"u}era y Arcas, ``Communication-efficient learning of deep networks from decentralized data,'' in \textit{Proc. 20th Int. Conf. Artif. Intell. Statist. (AISTATS)}, 2017, pp.~1273--1282.

\bibitem{Kairouz2021}
P.~Kairouz \textit{et al.}, ``Advances and open problems in federated learning,'' \textit{Found. Trends Mach. Learn.}, vol.~14, no.~1--2, pp.~1--210, 2021, doi: 10.1561/2200000083.

\bibitem{Taherpour2026ZKHybridFL}
A.~Taherpour and X.~Wang, ``ZK-HybridFL: Zero-knowledge proof-enhanced hybrid ledger for federated learning,'' \textit{IEEE Trans. Neural Netw. Learn. Syst.}, early access, pp.~1--15, Feb.~2026, doi: 10.1109/TNNLS.2026.3658993.

\bibitem{Bonawitz2017}
K.~Bonawitz \textit{et al.}, ``Practical secure aggregation for privacy-preserving machine learning,'' in \textit{Proc. ACM SIGSAC Conf. Comput. Commun. Secur. (CCS)}, 2017, pp.~1175--1191, doi: 10.1145/3133956.3133982.

\bibitem{Zhu2019}
L.~Zhu, Z.~Liu, and S.~Han, ``Deep leakage from gradients,'' in \textit{Proc. Adv. Neural Inf. Process. Syst. (NeurIPS)}, 2019, pp.~14747--14756.

\bibitem{Boyd2006}
S.~Boyd, A.~Ghosh, B.~Prabhakar, and D.~Shah, ``Randomized gossip algorithms,'' \textit{IEEE Trans. Inf. Theory}, vol.~52, no.~6, pp.~2508--2530, Jun.~2006, doi: 10.1109/TIT.2006.874516.

\bibitem{Lian2017}
X.~Lian, C.~Zhang, H.~Zhang, C.-J. Hsieh, W.~Zhang, and J.~Liu, ``Can decentralized algorithms outperform centralized algorithms? A case study for decentralized parallel stochastic gradient descent,'' in \textit{Proc. Adv. Neural Inf. Process. Syst. (NeurIPS)}, 2017, pp.~5330--5340.

\bibitem{Lian2018}
X.~Lian, W.~Zhang, C.-J. Hsieh, C.~Zhang, and J.~Liu, ``Asynchronous decentralized parallel stochastic gradient descent,'' in \textit{Proc. Int. Conf. Mach. Learn. (ICML)}, 2018, pp.~3043--3052.

\bibitem{Assran2019}
M.~Assran, N.~Loizou, N.~Ballas, and M.~G. Rabbat, ``Stochastic gradient push for distributed deep learning,'' in \textit{Proc. Int. Conf. Mach. Learn. (ICML)}, 2019, pp.~344--353.

\bibitem{Koloskova2019}
A.~Koloskova, S.~Stich, and M.~Jaggi, ``Decentralized stochastic optimization and gossip algorithms with compressed communication,'' in \textit{Proc. Int. Conf. Mach. Learn. (ICML)}, 2019, pp.~3478--3487.

\bibitem{Tang2023}
Z.~Tang, S.~Shi, B.~Li, and X.~Chu, ``GossipFL: A decentralized federated learning framework with sparsified and adaptive communication,'' \textit{IEEE Trans. Parallel Distrib. Syst.}, vol.~34, no.~3, pp.~909--922, Mar.~2023, doi: 10.1109/TPDS.2022.3230938.

\bibitem{Chen2023}
Q.~Chen, Z.~Wang, H.~Wang, and X.~Lin, ``FedDual: Pair-wise gossip helps federated learning in large decentralized networks,'' \textit{IEEE Trans. Inf. Forensics Security}, vol.~18, pp.~335--350, 2023, doi: 10.1109/TIFS.2022.3222935.

\bibitem{Wang2025}
H.~Wang and Y.~Chi, ``Communication-efficient federated optimization over semi-decentralized networks,'' \textit{IEEE Trans. Signal Inf. Process. Netw.}, vol.~11, pp.~147--160, 2025, doi: 10.1109/TSIPN.2025.3539004.

\bibitem{Taherpour2025SPIDChain}
A.~Taherpour and X.~Wang, ``SPID-Chain: A smart contract-enabled, polar-coded interoperable DAG chain,'' arXiv:2501.11794, Jan.~2025, doi: 10.48550/arXiv.2501.11794.

\bibitem{Blanchard2017}
P.~Blanchard, E.~M. El~Mhamdi, R.~Guerraoui, and J.~Stainer, ``Machine learning with adversaries: Byzantine tolerant gradient descent,'' in \textit{Proc. Adv. Neural Inf. Process. Syst. (NeurIPS)}, 2017, pp.~119--129.

\bibitem{Yin2018}
D.~Yin, Y.~Chen, R.~Kannan, and P.~Bartlett, ``Byzantine-robust distributed learning: Towards optimal statistical rates,'' in \textit{Proc. Int. Conf. Mach. Learn. (ICML)}, 2018, pp.~5650--5659.

\bibitem{Pillutla2022}
K.~Pillutla, S.~M. Kakade, and Z.~Harchaoui, ``Robust aggregation for federated learning,'' \textit{IEEE Trans. Signal Process.}, vol.~70, pp.~1142--1154, 2022, doi: 10.1109/TSP.2022.3153135.

\bibitem{Fung2018}
C.~Fung, C.~J.~M. Yoon, and I.~Beschastnikh, ``Mitigating sybils in federated learning poisoning,'' arXiv:1808.04866, 2018, doi: 10.48550/arXiv.1808.04866.

\bibitem{Bagdasaryan2020}
E.~Bagdasaryan, A.~Veit, Y.~Hua, D.~Estrin, and V.~Shmatikov, ``How to backdoor federated learning,'' in \textit{Proc. Int. Conf. Artif. Intell. Statist. (AISTATS)}, 2020, pp.~2938--2948.

\bibitem{Wang2020}
H.~Wang \textit{et al.}, ``Attack of the tails: Yes, you really can backdoor federated learning,'' in \textit{Proc. Adv. Neural Inf. Process. Syst. (NeurIPS)}, 2020, pp.~16070--16084.

\bibitem{Wang2019a}
B.~Wang, Y.~Yao, S.~Shan, H.~Li, B.~Viswanath, H.~Zheng, and B.~Y. Zhao, ``Neural Cleanse: Identifying and mitigating backdoor attacks in neural networks,'' in \textit{Proc. IEEE Symp. Secur. Privacy (S\&P)}, 2019, pp.~707--723, doi: 10.1109/SP.2019.00031.

\bibitem{STRIP2019}
Y.~Gao, C.~Xu, D.~Wang, S.~Chen, D.~C. Ranasinghe, and S.~Nepal, ``STRIP: A defence against Trojan attacks on deep neural networks,'' in \textit{Proc. 35th Annu. Comput. Secur. Appl. Conf. (ACSAC)}, 2019, pp.~113--125, doi: 10.1145/3359789.3359790.

\bibitem{SpectralSignatures2018}
B.~Tran, J.~Li, and A.~Madry, ``Spectral signatures in backdoor attacks,'' in \textit{Proc. Adv. Neural Inf. Process. Syst. (NeurIPS)}, 2018, pp.~8011--8021.

\bibitem{Taherpour2025CodedBlockchainIoT}
A.~Taherpour and X.~Wang, ``A high-throughput and secure coded blockchain for IoT,'' \textit{IEEE Trans. Dependable Secure Comput.}, vol.~22, no.~4, pp.~3561--3579, Jul.--Aug.~2025, doi: 10.1109/TDSC.2025.3532850.

\bibitem{Taherpour2024HybridChain}
A.~Taherpour and X.~Wang, ``HybridChain: Fast, accurate, and secure transaction processing with distributed learning,'' \textit{IEEE Trans. Parallel Distrib. Syst.}, vol.~35, no.~6, pp.~968--982, Jun.~2024, doi: 10.1109/TPDS.2024.3381593.

\bibitem{Kim2019}
H.~Kim, J.~Park, M.~Bennis, and S.-L. Kim, ``Blockchained on-device federated learning,'' \textit{IEEE Commun. Lett.}, vol.~24, no.~6, pp.~1279--1283, Jun.~2020, doi: 10.1109/LCOMM.2019.2921755.

\bibitem{Warnat2021}
S.~Warnat-Herresthal \textit{et al.}, ``Swarm learning for decentralized and confidential clinical machine learning,'' \textit{Nature}, vol.~594, no.~7862, pp.~265--270, Jun.~2021, doi: 10.1038/s41586-021-03583-3.

\bibitem{Qu2020}
Y.~Qu \textit{et al.}, ``Decentralized privacy using blockchain-enabled federated learning in fog computing,'' \textit{IEEE Internet Things J.}, vol.~7, no.~6, pp.~5171--5183, Jun.~2020, doi: 10.1109/JIOT.2020.2977383.

\bibitem{Li2022}
J.~Li \textit{et al.}, ``Blockchain assisted decentralized federated learning (BLADE-FL): Performance analysis and resource allocation,'' \textit{IEEE Trans. Parallel Distrib. Syst.}, vol.~33, no.~10, pp.~2401--2415, Oct.~2022, doi: 10.1109/TPDS.2021.3138848.

\bibitem{jkmn1}
Z.~Cai, J.~Chen, Y.~Fan, Z.~Zheng, and K.~Li, ``Blockchain-empowered federated learning: Benefits, challenges, and solutions,'' \textit{IEEE Trans. Big Data}, vol.~11, no.~5, pp.~2244--2263, Oct.~2025, doi: 10.1109/TBDATA.2025.3541560.

\bibitem{bambool1}
Z.~Peng \textit{et al.}, ``VFChain: Enabling verifiable and auditable federated learning via blockchain systems,'' \textit{IEEE Trans. Netw. Sci. Eng.}, vol.~9, no.~1, pp.~173--186, Jan.--Feb.~2022, doi: 10.1109/TNSE.2021.3050781.

\bibitem{WWWZ1}
A.~P. Kalapaaking, I.~Khalil, X.~Yi, K.-Y. Lam, G.-B. Huang, and N.~Wang, ``Auditable and verifiable federated learning based on blockchain-enabled decentralization,'' \textit{IEEE Trans. Neural Netw. Learn. Syst.}, vol.~36, no.~1, pp.~102--115, Jan.~2025, doi: 10.1109/TNNLS.2024.3407670.

\bibitem{Yuan2024}
S.~Yuan, B.~Cao, Y.~Sun, Z.~Wan, and M.~Peng, ``Secure and efficient federated learning through layering and sharding blockchain,'' \textit{IEEE Trans. Netw. Sci. Eng.}, vol.~11, no.~3, pp.~3120--3134, May--Jun.~2024, doi: 10.1109/TNSE.2024.3361458.

\bibitem{Cao2023}
M.~Cao, L.~Zhang, and B.~Cao, ``Toward on-device federated learning: A direct acyclic graph-based blockchain approach,'' \textit{IEEE Trans. Neural Netw. Learn. Syst.}, vol.~34, no.~4, pp.~2028--2042, Apr.~2023, doi: 10.1109/TNNLS.2021.3105810.

\bibitem{Chen2025}
J.~Chen, D.~Wu, S.~Guo, F.~Qi, and X.~Qiu, ``DAG-EnseFL: DAG-based asynchronous federated learning with ensemble distillation,'' \textit{IEEE Trans. Big Data}, vol.~11, no.~6, pp.~3342--3355, Dec.~2025, doi: 10.1109/TBDATA.2025.3594244.

\bibitem{Wang2025DAG}
Q.~Wang, S.~Xu, R.~Xu, and B.~Ai, ``A DAG-blockchain-assisted federated learning framework in wireless networks: Learning performance and throughput optimization schemes,'' \textit{IEEE Trans. Veh. Technol.}, vol.~74, no.~3, pp.~5097--5113, Mar.~2025, doi: 10.1109/TVT.2024.3502444.

\bibitem{BB6}
G.~Yu \textit{et al.}, ``IronForge: An open, secure, fair, decentralized federated learning,'' \textit{IEEE Trans. Neural Netw. Learn. Syst.}, vol.~36, no.~1, pp.~354--368, Jan.~2025, doi: 10.1109/TNNLS.2023.3329249.

\bibitem{Baird2016}
L.~Baird, ``The Swirlds Hashgraph consensus algorithm: Fair, fast, Byzantine fault tolerance,'' Swirlds, Tech. Rep. SWIRLDS-TR-2016-01, 2016. [Online]. Available: \url{https://www.swirlds.com/downloads/SWIRLDS-TR-2016-01.pdf}. Accessed: Jul.~9, 2026.

\bibitem{Castro1999}
M.~Castro and B.~Liskov, ``Practical Byzantine fault tolerance,'' in \textit{Proc. 3rd Symp. Operating Syst. Design Implement. (OSDI)}, 1999, pp.~173--186.

\bibitem{FLTrust2021}
X.~Cao, M.~Fang, J.~Liu, and N.~Z. Gong, ``FLTrust: Byzantine-robust federated learning via trust bootstrapping,'' in \textit{Proc. Netw. Distrib. Syst. Secur. Symp. (NDSS)}, 2021, doi: 10.14722/ndss.2021.24434.

\bibitem{ZenoPP2020}
C.~Xie, O.~Koyejo, and I.~Gupta, ``Zeno++: Robust fully asynchronous SGD,'' in \textit{Proc. Int. Conf. Mach. Learn. (ICML)}, ser. \textit{Proc. Mach. Learn. Res.}, vol.~119, 2020, pp.~10495--10503.

\bibitem{Iglewicz1993}
B.~Iglewicz and D.~C. Hoaglin, \textit{How to Detect and Handle Outliers}. Milwaukee, WI, USA: ASQC Quality Press, 1993.

\bibitem{Leys2013}
C.~Leys, C.~Ley, O.~Klein, P.~Bernard, and L.~Licata, ``Detecting outliers: Do not use standard deviation around the mean, use absolute deviation around the median,'' \textit{J. Exp. Soc. Psychol.}, vol.~49, no.~4, pp.~764--766, Jul.~2013, doi: 10.1016/j.jesp.2013.03.013.

\bibitem{RFC8032}
S.~Josefsson and I.~Liusvaara, ``Edwards-Curve Digital Signature Algorithm (EdDSA),'' RFC 8032, Jan.~2017, doi: 10.17487/RFC8032.

\bibitem{Ed25519Bernstein2012}
D.~J. Bernstein, N.~Duif, T.~Lange, P.~Schwabe, and B.-Y. Yang, ``High-speed high-security signatures,'' \textit{J. Cryptographic Eng.}, vol.~2, no.~2, pp.~77--89, Sep.~2012, doi: 10.1007/s13389-012-0027-1.

\bibitem{dagsim_ref}
B.~Schachenhofer, ``dagsim: Hashgraph simulator,'' GitHub repository. [Online]. Available: \url{https://github.com/BSchachenhofer/dagsim}. Accessed: Jul.~9, 2026.

\bibitem{jgrapht_ref}
D.~Michail, J.~Kinable, B.~Naveh, and J.~V. Sichi, ``JGraphT---A Java library for graph data structures and algorithms,'' \textit{ACM Trans. Math. Softw.}, vol.~46, no.~2, Art.~no.~16, May~2020, doi: 10.1145/3381449.

\bibitem{bouncycastle_ref}
The Legion of the Bouncy Castle Inc., ``Bouncy Castle Crypto APIs,'' [Online]. Available: \url{https://www.bouncycastle.org/}. Accessed: Jul.~9, 2026.

\bibitem{pytorch_ref}
A.~Paszke \textit{et al.}, ``PyTorch: An imperative style, high-performance deep learning library,'' in \textit{Proc. Adv. Neural Inf. Process. Syst. (NeurIPS)}, 2019, pp.~8024--8035.

\bibitem{numpy_ref}
C.~R. Harris \textit{et al.}, ``Array programming with NumPy,'' \textit{Nature}, vol.~585, no.~7825, pp.~357--362, Sep.~2020, doi: 10.1038/s41586-020-2649-2.

\bibitem{grpckotlin_ref}
The gRPC Authors, ``gRPC-Kotlin,'' GitHub repository. [Online]. Available: \url{https://github.com/grpc/grpc-kotlin}. Accessed: Jul.~9, 2026.

\bibitem{grpcio_ref}
The gRPC Authors, ``grpcio: gRPC for Python,'' PyPI. [Online]. Available: \url{https://pypi.org/project/grpcio/}. Accessed: Jul.~9, 2026.

\bibitem{protobuf_ref}
Google, ``Protocol Buffers documentation,'' [Online]. Available: \url{https://protobuf.dev/}. Accessed: Jul.~9, 2026.

\bibitem{matplotlib_ref}
J.~D. Hunter, ``Matplotlib: A 2D graphics environment,'' \textit{Comput. Sci. Eng.}, vol.~9, no.~3, pp.~90--95, May--Jun.~2007, doi: 10.1109/MCSE.2007.55.

\bibitem{adpsgd_repo}
Facebook Research, ``stochastic\_gradient\_push: PyTorch implementation of Stochastic Gradient Push,'' GitHub repository. [Online]. Available: \url{https://github.com/facebookresearch/stochastic_gradient_push}. Accessed: Jul.~9, 2026.

\bibitem{blade_repo}
Y.-M. Shao, ``BLADE-FL: Blockchain Assisted Decentralized Federated Learning,'' GitHub repository. [Online]. Available: \url{https://github.com/ElvisShaoYumeng/BLADE-FL}. Accessed: Jul.~9, 2026.

\bibitem{ganache_ref}
Truffle Suite, ``Ganache,'' [Online]. Available: \url{https://archive.trufflesuite.com/ganache/}. Accessed: Jul.~9, 2026.

\bibitem{gggg1}
S.~Yuan, ``ChainsFL: A blockchain-based federated learning implementation,'' GitHub repository, 2021. [Online]. Available: \url{https://github.com/shuoyuan/ChainsFL-implementation}. Accessed: Jul.~9, 2026.

\bibitem{mnist_ref}
Y.~LeCun, C.~Cortes, and C.~J.~C. Burges, ``The MNIST database of handwritten digits,'' [Online]. Available: \url{http://yann.lecun.com/exdb/mnist/}. Accessed: Jul.~9, 2026.

\bibitem{gggg12}
M.~Sandler, A.~Howard, M.~Zhu, A.~Zhmoginov, and L.-C. Chen, ``MobileNetV2: Inverted residuals and linear bottlenecks,'' in \textit{Proc. IEEE/CVF Conf. Comput. Vis. Pattern Recognit. (CVPR)}, 2018, pp.~4510--4520, doi: 10.1109/CVPR.2018.00474.

\bibitem{penn_treebank_ref}
M.~P. Marcus, B.~Santorini, and M.~A. Marcinkiewicz, ``Building a large annotated corpus of English: The Penn Treebank,'' \textit{Comput. Linguistics}, vol.~19, no.~2, pp.~313--330, 1993.

\bibitem{networkx_ref}
A.~A. Hagberg, D.~A. Schult, and P.~J. Swart, ``Exploring network structure, dynamics, and function using NetworkX,'' in \textit{Proc. Python Sci. Conf. (SciPy)}, 2008, pp.~11--15.

\end{thebibliography}

}
\end{document}